\theoremstyle{plain}
\newtheorem{theorem}{Theorem}[section]
\newtheorem{lemma}[theorem]{Lemma}
\theoremstyle{definition}
\newtheorem{definition}[theorem]{Definition}
\newtheorem{assumption}[theorem]{Assumption}
\theoremstyle{remark}
\icmltitlerunning{Learning High-quality Model and Policy from Low-quality Offline Visual Datasets}
\begin{document}

\twocolumn[
\icmltitle{SeMOPO: Learning High-quality Model and Policy from Low-quality \\Offline Visual Datasets}



\icmlsetsymbol{equal}{*}

\begin{icmlauthorlist}
\icmlauthor{Shenghua Wan}{nju,lamda}
\icmlauthor{Ziyuan Chen}{pku}
\icmlauthor{Le Gan}{nju,lamda}
\icmlauthor{Shuai Feng}{bit}
\icmlauthor{De-Chuan Zhan}{nju,lamda}
\end{icmlauthorlist}

\icmlaffiliation{nju}{School of Artificial Intelligence, Nanjing University, China} 
\icmlaffiliation{lamda}{National Key Laboratory for Novel Software Technology, Nanjing University, China}
\icmlaffiliation{pku}{School of Mathematical Sciences, Center for Statistical Science, Peking University, Beijing, China}
\icmlaffiliation{bit}{School of Cyberspace Science and Technology, Beijing Institute of Technology, Beijing, China}

\icmlcorrespondingauthor{De-Chuan Zhan}{zhandc@nju.edu.cn}

\icmlkeywords{Machine Learning, ICML}

\vskip 0.3in
]



\printAffiliationsAndNotice{}  

\begin{abstract}
Model-based offline reinforcement Learning (RL) is a promising approach that leverages existing data effectively in many real-world applications, especially those involving high-dimensional inputs like images and videos. To alleviate the distribution shift issue in offline RL, existing model-based methods heavily rely on the uncertainty of learned dynamics. However, the model uncertainty estimation becomes significantly biased when observations contain complex distractors with non-trivial dynamics. To address this challenge, we propose a new approach - \emph{Separated Model-based Offline Policy Optimization} (SeMOPO) - decomposing latent states into endogenous and exogenous parts via conservative sampling and estimating model uncertainty on the endogenous states only. We provide a theoretical guarantee of model uncertainty and performance bound of SeMOPO. To assess the efficacy, we construct the Low-Quality Vision Deep Data-Driven Datasets for RL (LQV-D4RL), where the data are collected by non-expert policy and the observations include moving distractors. Experimental results show that our method substantially outperforms all baseline methods, and further analytical experiments validate the critical designs in our method. The project website is \href{https://sites.google.com/view/semopo}{https://sites.google.com/view/semopo}.
\end{abstract}

\section{Introduction}
\label{sec:intro}

Offline reinforcement learning (RL)~\cite{lange2012batch,Levine2020OfflineRL}, which learns policies from fixed datasets without the need for costly interactions with online environments, has been increasingly applied in real-world tasks such as drug discovery~\cite{Designbench22} and autonomous driving~\cite{offlinerl_autodriving2021,offlinerl_autodriving2023}. Offline RL saves the cost of interacting with the environment and improves sample efficiency. Real-world RL datasets typically exhibit two main characteristics: (1) they are often collected by non-expert or random policies~\cite{Jin2020IsPP,Rashidinejad2021BridgingOR}, and (2) they originate from real environments with high-dimensional observations, such as images or videos~\cite{rafailov2021offline,offline_survey2022}, containing complex noise like moving backgrounds~\cite{2022vd4rl}. Learning high-quality policies from low-quality datasets poses a significant challenge.

Concerning the first characteristic, previous research has demonstrated the efficacy of offline RL with highly sub-optimal or random datasets~\cite{Jin2020IsPP,Rashidinejad2021BridgingOR,offline2BC2022}. Regarding the second characteristic, some studies have employed model-based RL (MBRL) methods to address the challenges of high-dimensional inputs. MBRL learns a low-dimensional surrogate model of the high-dimensional environment~\cite{world_models18,planet19}, allowing the agent to interact with this model to gather additional trajectories in the low-dimensional state space. Offline MBRL methods improve sample efficiency and reduce storage costs. However, an unavoidable gap exists between the actual environment and the learned model from offline datasets. Offline MBRL methods~\cite{yu2020mopo} mitigate the distribution shift problem by incorporating the model prediction's uncertainty as a penalty in the reward function.

In real-world decision-making scenarios, uncertainty arises not only from task-relevant dynamics but also from irrelevant distractors in observations, such as moving backgrounds. However, previous offline visual RL works~\cite{yu2020mopo,rafailov2021offline,2022vd4rl} do not differentiate between these two types of uncertainty. If both are treated as model uncertainty in offline policy training and used as a penalty in the reward function, the learned policies may become overly conservative. Therefore, it is crucial to consider task-irrelevant dynamics uncertainty during offline model training.

To address these challenges, we propose the Separated Model-based Offline Policy Optimization (SeMOPO) method. We first analyze the performance lower bound under the Exogenous Block MDP (EX-BMDP) assumption~\cite{EXBMDP22} in the case of offline learning from noisy visual datasets. We find that such a lower bound is empirically tighter than that under the POMDP assumption~\cite{pomdp} commonly used in previous offline RL research~\cite{rafailov2021offline,2022vd4rl}. We replace the typical random sampling method in offline MBRL~\cite{yu2020mopo,rafailov2021offline,2022vd4rl} with our proposed conservative sampling method, training the separated model on trajectories collected by relatively deterministic behavior policies. After obtaining the task-relevant model from the offline dataset, we train the policy on the endogenous states imagined by this model. To evaluate our method, we construct the Low-Quality Vision Datasets for Deep Data-Driven RL (LQV-D4RL), including 15 different settings from DMControl Suite~\cite{DMC2018} and Gym~\cite{brockman2016openai} environments. SeMOPO achieves significantly better performance than all baseline methods on the LQV-D4RL. Our analytical experiments confirm the effectiveness of the conservative sampling method in identifying task-relevant information and the superiority of estimating uncertainty estimation on it during offline policy training.

The main contributions of our work are: (i) We propose a new approach named Separated Model-based Offline Policy Optimization (SeMOPO), which aims to solve offline visual RL tasks from low-quality datasets collected by sub-optimal policies and with complex distractors in observations. (ii) To establish a benchmark for the practical setting, we construct Low-Quality Vision Datasets for Deep Data-Driven RL (LQV-D4RL), which offers new research opportunities for practitioners in the field. (iii) We provide a theoretical analysis of the lower performance bound of policies learned on the endogenous state space and the superiority of our proposed conservative sampling method in differentiating task-relevant and irrelevant information. (iv) We show excellent performance of SeMOPO on LQV-D4RL, with analytical experiments validating the efficacy of each component of the method.

\section{Preliminaries}

In our work, we focus on learning skills from the offline dataset consisting of image observations, actions, and rewards, denoted as $\mathcal{B}=\{o_{1:T}^{i}, a_{1:T}^{i}, r_{1:T}^{i}\}_{i=1}^n$. The dataset contains moving distractors within the visual observations and is collected by policies $\pi_B$ with suboptimal or random behaviors. To better model the environment, we consider the Exogenous Block Markov Decision Process (EX-BMDP) setting~\cite{EXBMDP22}, an adaptation of the Block MDP~\cite{BMDP19}. A Block MDP consists of a set of observations $\mathcal{O}$; a set of latent states, $\mathcal{Z}$ with cardinality $Z$; a finite set of actions, $\mathcal{A}$ with cardinality $A$; a transition function, $T: \mathcal{Z}\times\mathcal{A}\to \Delta(\mathcal{Z})$; a reward function $R:\mathcal{O}\times\mathcal{A}\to[0,1]$; an emission function $\mathcal{U}:\mathcal{Z}\to\Delta(O)$; and an initial state distribution $\mu_0\in\Delta(\mathcal{Z})$. The agent has no access to the latent states but can only receive the observations. The block structure holds if the support of the emission distributions of any two latent states are disjoint, $\text{supp}(\mathcal{U}(\cdot|z_1))\cap \text{supp}(\mathcal{U}(\cdot|z_2))=\emptyset$ when $z_1\neq z_2$, where $\text{supp}(\mathcal{U}(\cdot|z))=\{o\in\mathcal{O}|\mathcal{U}(o|z)>0\}$, distinguishing the BMDP from the Partially Observable MDP~\cite{pomdp}. We now restate the definition of EX-BMDP:

\begin{definition}\label{definition:EX-BMDP}
    (Exogenous Block Markov Decision Process). An EX-BMDP is a BMDP such that the latent state can be decoupled into two parts $z=(s^+,s^-)$ where $s^+\in \mathcal{S}^+$ is the endogenous state and $s^-\in \mathcal{S}^-$ is the exogenous state. For $z\in\mathcal{Z}$ the initial distribution and transition functions are decoupled, that is: $\mu(z)=\mu(s^+)\mu(s^-)$, and $T(z^{\prime}|z, a)=T({s^+}^{\prime}|s^+, a)T({{s^-}}^{\prime}|s^-)$
\end{definition}
EX-BMDP decomposes the dynamics and separates the exogenous noise from the endogenous state. This noise is not controlled by the agent, but it may have a non-trivial dynamic. EX-BMDP provides a natural way to characterize this type of noise.

Model-based Offline Policy Optimization~\cite{yu2020mopo} (MOPO) is a typical offline RL method, showing the efficacy of model-based methods to learn policies from offline datasets. MOPO learns the dynamics model $\widehat{T}(\cdot|s,a)$ and the reward model $\hat{r}(s, a)$ from the fixed dataset $\mathcal{B}$, and constructs an uncertainty-penalized MDP whose reward function is defined as $\tilde{r}(s, a)=r(s, a)-u(s, a)$, where $u(s, a)$ is the model uncertainty estimation. MOPO proves that optimizing the policy in the uncertainty-penalized MDP with an admissible uncertainty estimation $u(s, a)$ is equivalent to optimizing it in the true MDP. We adopt a similar pipeline to MOPO to learn the model and policy from offline datasets.

\section{SeMOPO: Separated Model-based Offline Policy Optimization}

Model-based offline RL methods use model uncertainty to address the distribution shift problem between the online environment and the offline dataset. In~\Cref{subsec:uncertainty_estimation}, we first provide a theoretical justification of the performance bound under the Exogenous Block MDP assumption when learning from offline datasets with visual distractors and illustrate the inherent limitations of employing the POMDP framework in this scenario. In~\cref{subsec:restricted_policy}, we propose a sampling strategy to help the model differentiate task-relevant and irrelevant components and give a corresponding theoretical analysis. In~\cref{subsec:practical_imple}, we detail a practical implementation of SeMOPO, which integrates these advancements.

\subsection{Uncertainty Estimation and Performance Bound under Exogenous Block MDP}\label{subsec:uncertainty_estimation}

In this section, we theoretically analyze the lower bound of policy performance under EX-BMDP. We also provide empirical evidence that, in some specific cases, the uncertainty under EX-BMDP is lower than that under POMDP, resulting in a tighter lower performance bound. We start our analysis by extending the well-known telescoping lemma~\cite{telesopinglemma19} from the state space $\mathcal{S}$ to the endogenous state space $\mathcal{S}^+$:

\begin{lemma}\label{lemma:telescoping}
    (Telescoping lemma in the endogenous state space). Let $M$ and $\widetilde{M}$ be two MDPs with the same reward function $r$, but different dynamics $T$ and $\widetilde{T}$ respectively. Let $G_{\widetilde{M}}^\pi(s^+, a):=\underset{{s^+}^{\prime} \sim \widetilde{T}(s^+, a)}{\mathbb{E}}\left[V_M^\pi\left({s^+}^{\prime}\right)\right]-\underset{{s^+}^{\prime} \sim T(s^+, a)}{\mathbb{E}}\left[V_M^\pi\left({s^+}^{\prime}\right)\right]$. Then,
$$
\eta_{\widetilde{M}}(\pi)-\eta_M(\pi)=\gamma \underset{(s^+, a) \sim \rho_{\widetilde{T}}^\pi}{\overline{\mathbb{E}}}\left[G_{\widetilde{M}}^\pi(s^+, a)\right]
$$
\end{lemma}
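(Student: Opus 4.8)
Although the lemma is phrased in terms of the EX-BMDP endogenous state, its proof does not actually use the decoupling of \Cref{definition:EX-BMDP}: the statement is exactly the telescoping lemma of \cite{telesopinglemma19} with the state space $\mathcal{S}$ renamed to the endogenous state space $\mathcal{S}^+$, the kernels $T,\widetilde T$ read as the endogenous transition kernels, and $\pi$ an $s^+$-measurable policy (the EX-BMDP structure is what later makes such an endogenous process a bona fide MDP when one instantiates $M$ and $\widetilde M$ as endogenous dynamics models, but that is a separate matter handled elsewhere). So the plan is simply to reproduce the standard telescoping argument with $s^+$ in place of $s$, keeping track of the normalization conventions of $\eta$, $\rho^\pi_{\widetilde T}$ and $\overline{\mathbb{E}}$.

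The workhorse is a sequence of hybrid returns interpolating between the two objectives. For $j\ge 0$ define $W_j$ to be the expected discounted return of $\pi$ when the first $j$ endogenous transitions follow $\widetilde T$ and all later ones follow $T$:
$$W_j \;=\; \mathbb{E}\Bigl[\,\sum_{t=0}^{j-1}\gamma^t r(s_t^+,a_t) \;+\; \gamma^j V_M^\pi(s_j^+)\Bigr],$$
with $s_0^+\sim\mu$ and the prefix $s_0^+,a_0,\dots,s_j^+$ generated under $\pi$ and $\widetilde T$. By construction $W_0=\eta_M(\pi)$, while $r\in[0,1]$ gives $|V_M^\pi|\le (1-\gamma)^{-1}$, so the bootstrap term $\gamma^j V_M^\pi(s_j^+)\to 0$ and $W_j\to\eta_{\widetilde M}(\pi)$. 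Hence $\eta_{\widetilde M}(\pi)-\eta_M(\pi)=\sum_{j\ge 0}(W_{j+1}-W_j)$, the series converging absolutely (again by boundedness of $r$ and $\gamma<1$, which also licenses the interchange of summation and expectation below).

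It then remains to evaluate one increment. The returns $W_j$ and $W_{j+1}$ agree on steps $0,\dots,j-1$, so their difference is $\gamma^j\,\mathbb{E}\bigl[\,r(s_j^+,a_j)+\gamma V_M^\pi(s_{j+1}^+)-V_M^\pi(s_j^+)\,\bigr]$, where $s_{j+1}^+\sim\widetilde T(\cdot\,|\,s_j^+,a_j)$ and $(s_j^+,a_j)$ is drawn from the step-$j$ state-action law of $\pi$ under $\widetilde T$. Substituting the Bellman equation $V_M^\pi(s_j^+)=\mathbb{E}_{a_j\sim\pi}\bigl[r(s_j^+,a_j)+\gamma\,\mathbb{E}_{{s^+}'\sim T(s_j^+,a_j)}[V_M^\pi({s^+}')]\bigr]$ makes the reward terms cancel (the reward is the same in $M$ and $\widetilde M$ and the action law is $\pi$ in both), leaving precisely $\gamma^{j+1}\,\mathbb{E}\bigl[\,\mathbb{E}_{\widetilde T}[V_M^\pi]-\mathbb{E}_{T}[V_M^\pi]\,\bigr]=\gamma^{j+1}\,\mathbb{E}[\,G_{\widetilde M}^\pi(s_j^+,a_j)\,]$. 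Summing over $j$, the step-$j$ laws weighted by $\gamma^j$ assemble into the discounted occupancy $\rho^\pi_{\widetilde T}$ and one factor of $\gamma$ is pulled out, yielding $\gamma\,\overline{\mathbb{E}}_{(s^+,a)\sim\rho^\pi_{\widetilde T}}[\,G_{\widetilde M}^\pi(s^+,a)\,]$.

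I do not anticipate a real obstacle: the argument is routine once the hybrid returns are set up. The only points requiring care are (a) matching the $\gamma$ versus $(1-\gamma)$ bookkeeping in the last line to the paper's precise definitions of the discounted return and of the (normalized or unnormalized) occupancy measure appearing under $\overline{\mathbb{E}}$, and (b) the technical justification, via boundedness of the reward and $\gamma<1$, for telescoping an infinite series and swapping summation with expectation. A secondary point worth a remark is that the identity is meaningful only for policies measurable with respect to $s^+$; this is precisely the class SeMOPO optimizes, and it is where the EX-BMDP assumption is implicitly being leaned on.
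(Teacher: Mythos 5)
Your proof is correct and follows essentially the same route as the paper's: the same hybrid returns $W_j$ interpolating between $\widetilde{T}$ and $T$, the same telescoping $\eta_{\widetilde{M}}(\pi)-\eta_M(\pi)=\sum_{j\ge 0}(W_{j+1}-W_j)$, and the same evaluation of each increment as $\gamma^{j+1}\,\mathbb{E}[G_{\widetilde{M}}^\pi(s_j^+,a_j)]$ before reassembling into the discounted occupancy. Your added remarks on convergence of the series and on the policy being $s^+$-measurable are sound and, if anything, slightly more careful than the paper's presentation.
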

Note that if $\mathcal{F}$ is a set of functions mapping $\mathcal{S}^+$ to $\mathbb{R}$ that contains $V_M^{\pi}$ for all $\pi$, then, 
\begin{equation*}
\begin{aligned}
    &|G_{\widetilde{M}}^{\pi}(s^+,a)|\\
    \le& \sup_{f\in\mathcal{F}}\Big|\mathbb{E}_{{s^+}^{\prime}\sim \widetilde{T}(s^+,a)}[f({s^+}^{\prime})]-\mathbb{E}_{{s^+}^{\prime}\sim T(s^+,a)}[f({s^+}^{\prime})]\Big|\\
    =:& d_{\mathcal{F}}(\widetilde{T}(s^+,a),T(s^+,a))
\end{aligned}
\end{equation*}
where $d_{\mathcal{F}}$ is the integral probability metric (IPM) defined by $\mathcal{F}$. The telescoping lemma provides a way to measure the performance gap between policies in the true dynamics $T$ and the estimated dynamics $\widetilde{T}$.

\begin{assumption}\label{assmp:func_class}
    Let $\mathcal{S}^+$ be the endogenous state space under the EX-BMDP. We say $\tilde{u}: \mathcal{S}^+\times \mathcal{A}\to \mathbb{R}$ is an admissible error estimator for $\widetilde{T}$ if $d_{\mathcal{F}}(\widetilde{T}(s^+,a),T(s^+,a))\le \tilde{u}(s^+, a)$ for all $s\in \mathcal{S}^+,a\in\mathcal{A}$.
\end{assumption}

\begin{theorem} \label{theorem:EXBMDP} Under~\Cref{assmp:func_class}, we can define the uncertainty estimation $\epsilon_{\tilde{u}}(\pi)$ under the EX-BMDP as $\epsilon_{\tilde{u}}(\pi):=\mathop{\bar{\mathbb{E}}}\limits_{(s^+, a)\sim\rho_{\widetilde{T}}^{\pi}}[\tilde{u}(s^+, a)]$. Let $\tilde{\pi}$ denote the learned optimal policy under the endogenous MDP, then, 
    \begin{equation*}
        \eta_{M}(\tilde{\pi})\ge \sup_{\pi}\{\eta_{M}(\pi)-2\lambda\epsilon_{\tilde{u}}(\pi)\} 
    \end{equation*}
\end{theorem}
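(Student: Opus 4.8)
The plan is to mirror the MOPO performance-bound argument but with every object living on the endogenous state space $\mathcal{S}^+$ instead of $\mathcal{S}$, using \Cref{lemma:telescoping} in place of the standard telescoping lemma and \Cref{assmp:func_class} in place of MOPO's admissibility condition. Throughout, $\eta$ denotes the normalized (i.e.\ $(1-\gamma)$-weighted) discounted return and $\rho_{\widetilde{T}}^{\pi}$ the associated discounted occupancy over $(s^+,a)$; both are well defined on $\mathcal{S}^+$ because \Cref{definition:EX-BMDP} decouples the endogenous transition $T({s^+}'\mid s^+,a)$ from the exogenous part. We take the supremum over policies $\pi:\mathcal{S}^+\to\Delta(\mathcal{A})$, the class in which $\tilde\pi$ itself lives.

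First I would introduce three MDPs on $\mathcal{S}^+$ sharing the action space and discount: the true endogenous MDP $M$ with dynamics $T$ and reward $r$; the learned MDP $\widetilde{M}$ with dynamics $\widetilde{T}$ and the same reward $r$; and the uncertainty-penalized MDP $\widehat{M}$ with dynamics $\widetilde{T}$ and reward $\tilde{r}(s^+,a):=r(s^+,a)-\lambda\tilde{u}(s^+,a)$, so that by construction $\tilde\pi\in\arg\max_{\pi}\eta_{\widehat{M}}(\pi)$. Since $\widetilde{M}$ and $\widehat{M}$ differ only in reward and share the dynamics $\widetilde{T}$, they induce the same occupancy $\rho_{\widetilde{T}}^{\pi}$, so for every $\pi$
\begin{equation*}
\eta_{\widehat{M}}(\pi)=\eta_{\widetilde{M}}(\pi)-\lambda\,\bar{\mathbb{E}}_{(s^+,a)\sim\rho_{\widetilde{T}}^{\pi}}[\tilde{u}(s^+,a)]=\eta_{\widetilde{M}}(\pi)-\lambda\,\epsilon_{\tilde{u}}(\pi).
\end{equation*}

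Next I would bound the learned-versus-true return gap. By \Cref{lemma:telescoping}, $\eta_{\widetilde{M}}(\pi)-\eta_{M}(\pi)=\gamma\,\bar{\mathbb{E}}_{(s^+,a)\sim\rho_{\widetilde{T}}^{\pi}}[G_{\widetilde{M}}^{\pi}(s^+,a)]$, and by the IPM inequality stated right after that lemma together with \Cref{assmp:func_class}, $|G_{\widetilde{M}}^{\pi}(s^+,a)|\le d_{\mathcal{F}}(\widetilde{T}(s^+,a),T(s^+,a))\le\tilde{u}(s^+,a)$; assuming $\gamma\le\lambda$ (i.e.\ folding the discount into the penalty coefficient) this gives $|\eta_{\widetilde{M}}(\pi)-\eta_{M}(\pi)|\le\lambda\epsilon_{\tilde{u}}(\pi)$ for every $\pi$. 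Combining this with the displayed identity yields two one-sided bounds: $\eta_{M}(\pi)\ge\eta_{\widetilde{M}}(\pi)-\lambda\epsilon_{\tilde{u}}(\pi)=\eta_{\widehat{M}}(\pi)$, and $\eta_{\widehat{M}}(\pi)=\eta_{\widetilde{M}}(\pi)-\lambda\epsilon_{\tilde{u}}(\pi)\ge\eta_{M}(\pi)-2\lambda\epsilon_{\tilde{u}}(\pi)$. Finally, for an arbitrary competitor $\pi$ I would chain these with the optimality of $\tilde\pi$ in $\widehat{M}$,
\begin{equation*}
\eta_{M}(\tilde\pi)\ \ge\ \eta_{\widehat{M}}(\tilde\pi)\ \ge\ \eta_{\widehat{M}}(\pi)\ \ge\ \eta_{M}(\pi)-2\lambda\epsilon_{\tilde{u}}(\pi),
\end{equation*}
and take the supremum over $\pi$ to obtain the claim.

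I do not expect a deep obstacle here; the content is essentially bookkeeping. The one step that genuinely needs the EX-BMDP structure — and hence the main thing to get right — is the claim that $V_{M}^{\pi}$, $\rho_{\widetilde{T}}^{\pi}$, $G_{\widetilde{M}}^{\pi}$, and the reward all reduce to functions of $s^+$ alone; this is exactly where the factorization in \Cref{definition:EX-BMDP} (plus the assumption, implicit in \Cref{lemma:telescoping}, that the reward depends on the latent state only through $s^+$) enters and what legitimizes the lifted telescoping lemma. A secondary, purely cosmetic point is the $\gamma$-versus-$\lambda$ accounting: one either assumes $\lambda\ge\gamma$ as above or states \Cref{assmp:func_class} with the factor $\gamma$ already absorbed.
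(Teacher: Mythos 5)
Your proposal is correct and follows essentially the same route as the paper's proof: both lift the MOPO argument to $\mathcal{S}^+$ via \Cref{lemma:telescoping}, bound $|G_{\widetilde{M}}^{\pi}|$ by $\tilde{u}$ using \Cref{assmp:func_class}, derive the one-sided inequality $\eta_M(\pi)\ge\eta_{\widehat{M}}(\pi)$ and the two-sided bound $|\eta_M(\pi)-\eta_{\widetilde{M}}(\pi)|\le\lambda\epsilon_{\tilde{u}}(\pi)$, and chain them through the optimality of $\tilde\pi$ in the penalized MDP. The only difference is cosmetic: the paper absorbs the discount by setting $\lambda=\gamma c$, whereas you assume $\lambda\ge\gamma$, which you correctly flag as a bookkeeping choice.
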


\cref{theorem:EXBMDP} suggests that the performance under the true MDP $\mathcal{M}$ can be bounded by learning a policy in the endogenous state space using the reward penalized by the corresponding model uncertainty, which provides a theoretical guarantee on the performance of SeMOPO. The proof is in~\Cref{subapp:proof_EXBMDP}.

\textbf{Remarks.} The Partially Observable MDP used in prior works like LOMPO~\cite{rafailov2021offline} and Offline DV2~\cite{2022vd4rl} will give a biased model uncertainty estimation when learning from offline datasets containing task-irrelevant distractors in observations. These methods assume that $u: \mathcal{Z}\times \mathcal{A}\to \mathbb{R}$ can serve as an admissible error estimator for $\widehat{T}$ if $d_{\mathcal{F}}(\widehat{T}(z, a), T(z, a))\le u(z, a)$ for all $z$ and $a$, where $z$ represents the belief state of the true state $s$ within the latent space $\mathcal{Z}$. However, in addition to task-relevant information, there are task-irrelevant components in image observations that are beyond the agent's control. Without excluding them during model learning, these components will be absorbed into $z$, leading to biased uncertainty estimation based on $d_{\mathcal{F}}(\widehat{T}(z, a), T(z, a))$. Taking LOMPO as an example, the model uncertainty is estimated as the variance of the predicted probability of the next state given the current state and action, i.e., $\text{Var}\{\log \widehat{T}_{\theta_i}(z_{t+1}|z_t,a_t)\}$, which is larger than the model uncertainty $\text{Var}\{\log \widetilde{T}_{\theta_i}(s^+_{t+1}|s^+_t,a_t)\}$ under the EX-BMDP using the same calculation method. LOMPO then uses this model uncertainty as a reward penalty, leading to a loser lower performance bound than the EX-BMDP.

\begin{figure*}[tbp]
\centering
\includegraphics[width=\textwidth]{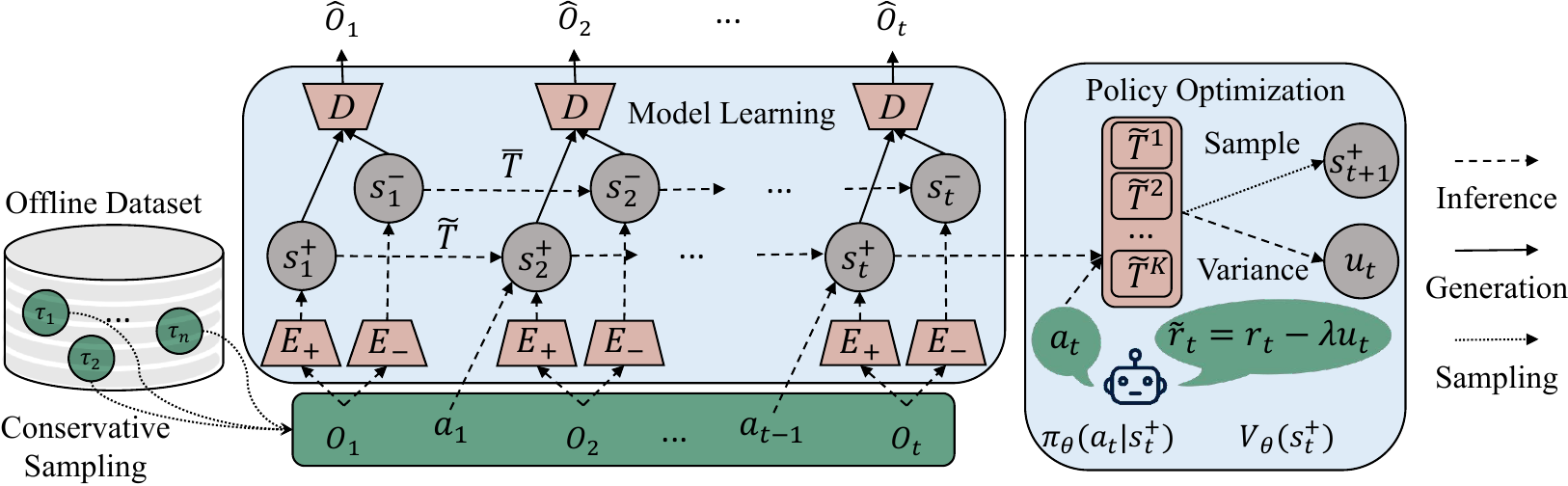}
\caption{The SeMOPO framework encompasses two parts: model learning and policy optimization. In the model learning phase, SeMOPO employs conservative sampling to select trajectories, which are then used to train models for endogenous and exogenous dynamics, each implemented as an ensemble of Gaussian distributions. During policy optimization, SeMOPO trains a policy $\pi_{\theta}(a_t|s^+_t)$ and a value model $V_{\theta}(s^+_t)$ based on the endogenous states generated by a sampled endogenous dynamics model $\tilde{T}^j$. SeMOPO uses the reward penalized by the variance of the endogenous dynamics models' predictions to train the value model.}
\label{fig:SeMOPO}
\end{figure*}

\subsection{Model Training with Conservative Sampling}\label{subsec:restricted_policy}
In~\Cref{subsec:uncertainty_estimation}, we know that a separated model under the EX-BMDP assumption benefits policy learning. In the following, we focus on how to learn such a separated model. Under the EX-BMDP assumption, the log-likelihood for the trajectory $\tau=\{o_1, a_1, \cdots, o_T, a_T\}$ can be decomposed as:
\begin{equation}
\label{eq:likelihood}
    \begin{aligned}
        \ln p(\tau) =& \sum_{t=1}^T \Big[\ln p(o_t|s^+_t,s^-_t) + \ln p(a_t|s^+_t) \\&+ \ln p(s^+_t|s^+_{t-1}, a_{t-1}) + \ln p(s^-_t|s^-_{t-1})\Big],
    \end{aligned}
\end{equation}
We omit the terms of the initial distribution of $s^+_1$ and $s^-_1$ for convenience. The detailed derivation is in~\Cref{app:deriv_likelihood}.

From the last three terms in~\Cref{eq:likelihood}, we can find that the agent's action is the key factor to help the model distinguish the task-relevant and irrelevant components. An intuitive insight is that the separated model should be updated using relatively deterministic actions. Excessively random action distributions can weaken the influence of agent actions on the endogenous transition, causing it to collapse into $p(s^+_{t+1}|s^+_t)$, making it indistinguishable from the exogenous transition. It naturally raises a question: how to sample the offline data to help decompose the endogenous and exogenous dynamics? Consider two kinds of training processes, the first is maximizing the likelihood where the data is sampled from all the mixed trajectories without replacing (\textit{Random Sampling}), and the second is maximizing the likelihood where data is sampled trajectory by trajectory (\textit{Conservative Sampling}). To formulate the relationship between these two likelihoods, we introduce~\Cref{theorem:separate}.

\begin{theorem}\label{theorem:separate}
    Consider the likelihood optimization problem on the same offline dataset $\mathcal{B}$ but with two different sampling methods. let $\mathcal{B}_{\pi_i}$ be the dataset collected by the behavior policy $\pi_i$, where $i=1,2,\cdots,n$. $\mathcal{B}_{\pi_{mix}}=\mathcal{B}_{\pi_1}\cup\mathcal{B}_{\pi_1}\cup\cdots\cup\mathcal{B}_{\pi_n}$ is the mixture of the datasets of all policies. Then we have
    \begin{align*}
          \mathbb{E}_{\tau\in \mathcal{B}_{\pi_{mix}}}\ln p(\tau) \le  \frac{1}{n}\sum_{i=1}^n \mathbb{E}_{\tau\in \mathcal{B}_{\pi_{i}}} \ln p(\tau),
    \end{align*}
    where $p(\tau)$ is the true density of $\tau$.
\end{theorem}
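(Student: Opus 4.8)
The plan is to read both sides of the bound as negative entropies and reduce \Cref{theorem:separate} to the concavity of Shannon (resp.\ differential) entropy. Write $p_i$ for the true trajectory density induced by behavior policy $\pi_i$. A trajectory drawn uniformly at random from the pooled set $\mathcal{B}_{\pi_{mix}}$ has marginal density $p_{mix}:=\frac{1}{n}\sum_{i=1}^{n}p_i$ (if the $\mathcal{B}_{\pi_i}$ have unequal sizes, $p_{mix}$ is the corresponding size-weighted mixture and the argument is unchanged). Interpreting ``$p(\tau)$, the true density of $\tau$'' as the density of the sampling distribution in each expectation, we get $\mathbb{E}_{\tau\in\mathcal{B}_{\pi_{mix}}}\ln p(\tau)=\int p_{mix}\ln p_{mix}=-H(p_{mix})$ and $\frac{1}{n}\sum_i\mathbb{E}_{\tau\in\mathcal{B}_{\pi_i}}\ln p(\tau)=-\frac{1}{n}\sum_i H(p_i)$, so the claim is exactly
\[
H(p_{mix})\ \ge\ \frac{1}{n}\sum_{i=1}^{n}H(p_i).
\]

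To prove this I would use the ``average divergence to the mixture'' identity. Expand $D_{\mathrm{KL}}(p_i\,\|\,p_{mix})=\int p_i\ln p_i-\int p_i\ln p_{mix}$; averaging over $i$ and using $\frac{1}{n}\sum_i p_i=p_{mix}$ in the second group of terms gives
\[
\frac{1}{n}\sum_{i=1}^{n}D_{\mathrm{KL}}(p_i\,\|\,p_{mix})= -\frac{1}{n}\sum_{i=1}^{n}H(p_i)+H(p_{mix}).
\]
Each KL term is nonnegative by Gibbs' inequality, so the left side is $\ge 0$, which rearranges to the claimed bound; equality holds iff all $p_i$ coincide (i.e.\ all behavior policies induce the same trajectory law). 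An equivalent one-line route: let $I\sim\mathrm{Unif}\{1,\dots,n\}$ and $\tau\mid I{=}i\sim p_i$, so $\tau$ has marginal $p_{mix}$ and $H(p_{mix})=H(\tau)=H(\tau\mid I)+\mathrm I(\tau;I)\ge H(\tau\mid I)=\frac{1}{n}\sum_i H(p_i)$, i.e.\ conditioning cannot increase entropy. The reasoning is the same whether $\tau$ is treated as the discrete latent chain or as a continuous variable with a density.

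It then remains to tie this to the EX-BMDP structure: by the factorization in \Cref{eq:likelihood}, the emission, endogenous-transition and exogenous-transition factors are common to all $\pi_i$, so the entire gap $H(p_{mix})-\frac{1}{n}\sum_i H(p_i)=\frac{1}{n}\sum_i D_{\mathrm{KL}}(p_i\,\|\,p_{mix})$ is carried by the behavior-policy factor $p(a_t\mid s^+_t)$. Hence random sampling (fitting one model against $p_{mix}$) pays the entropy of the smeared mixture action distribution, whereas conservative sampling (fitting against the sharper per-trajectory action distributions) attains strictly higher achievable likelihood unless the $\pi_i$ are identical---which is precisely why it does not collapse the endogenous transition into $p(s^+_{t+1}\mid s^+_t)$.

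The genuinely delicate point is not the inequality, which is elementary, but the bookkeeping of what ``the true density of $\tau$'' denotes on each side: the left expectation must be scored against the mixture law $p_{mix}$ and each right-hand expectation against its own component law $p_i$, otherwise the statement degenerates to an identity. A minor technical caveat is that the differential entropies must be finite for the KL-expansion step to be valid; under the regularity already imposed on the Block MDP emissions and transitions this holds, or one can simply run the argument on the finite latent state/action chain where no such issue arises.
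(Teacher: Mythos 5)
Your proof is correct and rests on the same elementary fact as the paper's own proof --- concavity of entropy, equivalently convexity of $x\ln x$ applied to the mixture of the per-policy laws --- but it packages the argument quite differently. The paper decomposes the log-likelihood timestep by timestep, introduces two auxiliary assumptions (that the emission depends only on an equivalence class $f(s^+,s^-)$, and that every $\pi_i$ induces the same marginal action distribution $p(a)$), rewrites the action term $\mathbb{E}_{a\sim\pi}\ln p_a(a|s^+)$ as $I(a;s^+)-H(a)$ so that the common $H(a)$ cancels, and then bounds $I(a\sim\pi_{mix};s^+)\le\frac{1}{n}\sum_i I(a\sim\pi_i;s^+)$ at each step. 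Your global route --- the identity $\frac{1}{n}\sum_i D_{\mathrm{KL}}(p_i\,\|\,p_{mix}) = H(p_{mix})-\frac{1}{n}\sum_i H(p_i)\ge 0$ applied directly to trajectory laws, followed by the observation that the emission and both transition factors are common to all $\pi_i$ so the entire gap sits in the action factor --- reaches the same conclusion more directly, yields the equality condition for free, and does not need the equal-action-marginal assumption at all (the paper only invokes it so that the $H(a)$ terms cancel in its mutual-information bookkeeping; your KL-to-mixture identity never introduces them). You also correctly isolate the one genuinely delicate point, which the theorem statement leaves implicit and the paper's proof reveals only through its notation: ``the true density of $\tau$'' must mean the mixture action law on the left and the component law $\pi_i$ on the right, since with a single fixed integrand the two sides coincide by linearity of expectation over the union of equal-sized datasets. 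Your reading agrees with the paper's intended one, and your equality condition (all $p_i$ identical) matches the paper's motivation that conservative sampling only helps when the behavior policies genuinely differ.
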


\Cref{theorem:separate} suggests that the total likelihood estimated on the sampled offline trajectory of a certain policy is larger than on the mixture dataset of all policies. The proof is in~\Cref{subapp:proof_{s}eparate}. Maximizing the likelihood will make the estimated distribution toward the true distribution in~\Cref{eq:likelihood}, which helps distinguish the endogenous and exogenous transitions. Notably,~\Cref{theorem:separate} does not guarantee that we can obtain the maximum likelihood via the conservative sampling method. Even so, we empirically show that the model can separate the task-relevant and irrelevant components well compared to the random sampling method.

\setlength{\tabcolsep}{2.2pt}
\begin{table*}[tbp]
\centering
\caption{Normalized test returns of SeMOPO and compared baselines on the LQV-D4RL benchmark. Mean scores (higher is better) with standard deviation are recorded across 4 seeds for each task. The original returns are shown in~\Cref{tab:original_results}.}
\begin{tabular}{@{}llccccccc@{}}
\toprule
\multicolumn{2}{c}{\textbf{Dataset}} &
  \textbf{SeMOPO} &
  \textbf{Offline DV2} &
  \textbf{LOMPO} &
  \textbf{DrQ+BC} &
  \textbf{DrQ+CQL} &
  \textbf{BC}&
  \textbf{InfoGating}\\ \midrule
\multirow{3}{*}{Walker Walk} & random  & \textbf{0.77 $\pm$ 0.06} & 0.27 $\pm$ 0.05 & 0.22 $\pm$ 0.06 & 0.04 $\pm$ 0.00 & 0.04 $\pm$ 0.00 &0.04 $\pm$ 0.00 & 0.07 $\pm$ 0.00 \\
 & medrep  & \textbf{0.87 $\pm$ 0.06} & 0.29 $\pm$ 0.04 & 0.36 $\pm$ 0.11 & 0.04 $\pm$ 0.01 & 0.03 $\pm$ 0.01 &0.04 $\pm$ 0.01 & 0.09 $\pm$ 0.03 \\
 & medium  & 0.45 $\pm$ 0.07 & 0.11 $\pm$ 0.04 & 0.10 $\pm$ 0.02 & 0.65 $\pm$ 0.06 & 0.03 $\pm$ 0.01 & \textbf{0.69 $\pm$ 0.05} & 0.16 $\pm$ 0.06 \\
\multirow{3}{*}{Cheetah Run} & random  & \textbf{0.63 $\pm$ 0.07} & 0.10 $\pm$ 0.03 & 0.16 $\pm$ 0.04 & 0.23 $\pm$ 0.08 & 0.00 $\pm$ 0.00 &0.05 $\pm$ 0.05 & 0.14 $\pm$ 0.03 \\
 & medrep  & 0.64 $\pm$ 0.07 & 0.16 $\pm$ 0.07 & 0.19 $\pm$ 0.08 & 0.41 $\pm$ 0.23 & 0.00 $\pm$ 0.00 & 0.05 $\pm$ 0.05 & \textbf{0.66 $\pm$ 0.12} \\
 & medium  & \textbf{0.73 $\pm$ 0.08} & 0.20 $\pm$ 0.14 & 0.13 $\pm$ 0.09 & 0.64 $\pm$ 0.07 & 0.00 $\pm$ 0.00 &0.62 $\pm$ 0.10 & 0.71 $\pm$ 0.09 \\
\multirow{3}{*}{Hopper Hop} & random  & 0.68 $\pm$ 0.06 & 0.00 $\pm$ 0.00 & 0.00 $\pm$ 0.00 & 0.08 $\pm$ 0.10 & 0.00 $\pm$ 0.00 &0.06 $\pm$ 0.08 & \textbf{0.79 $\pm$ 0.13} \\
 & medrep  & \textbf{0.91 $\pm$ 0.07} & 0.00 $\pm$ 0.00 & 0.00 $\pm$  0.00 & 0.25 $\pm$ 0.18 & 0.00 $\pm$ 0.00 &0.04 $\pm$ 0.02 & 0.53 $\pm$ 0.16 \\
 & medium  & \textbf{1.24 $\pm$ 0.16} & 0.02 $\pm$ 0.05 & 0.01 $\pm$ 0.04 & 0.81 $\pm$ 0.19 & 0.00 $\pm$ 0.00 &0.42 $\pm$ 0.07 & 0.58 $\pm$ 0.09 \\
\multirow{3}{*}{Humanoid Walk} & random & \textbf{0.01 $\pm$ 0.00} & 0.00 $\pm$ 0.00 & 0.00 $\pm$ 0.00 & 0.00 $\pm$ 0.00 & 0.00 $\pm$ 0.00 & 0.00 $\pm$ 0.00 & 0.00 $\pm$ 0.00 \\
& medrep & 0.01 $\pm$ 0.01 & 0.00 $\pm$ 0.00 & 0.01 $\pm$ 0.00 & 0.00 $\pm$ 0.00 & 0.00 $\pm$ 0.00 & \textbf{0.02 $\pm$ 0.01} & 0.00 $\pm$ 0.00 \\
& medium & 0.01 $\pm$ 0.01 & 0.01 $\pm$ 0.00 & 0.00 $\pm$ 0.00 & \textbf{0.02 $\pm$ 0.01} & 0.00 $\pm$ 0.00 & 0.01 $\pm$ 0.00 & 0.01 $\pm$ 0.00 \\
\multirow{3}{*}{Car Racing} & random & \textbf{0.93 $\pm$ 0.16} & 0.51 $\pm$ 0.18 & 0.86 $\pm$ 0.18 & -0.05 $\pm$ 0.05 & -0.23 $\pm$ 0.02 & -0.15 $\pm$ 0.01 & -0.20 $\pm$ 0.01 \\
& medrep & \textbf{0.80 $\pm$ 0.17} & 0.39 $\pm$ 0.18 & 0.72 $\pm$ 0.40 & -0.18 $\pm$ 0.01 & -0.23 $\pm$ 0.02 & -0.21 $\pm$ 0.02 & -0.20 $\pm$ 0.01 \\
& medium & \textbf{0.90 $\pm$ 0.34} &0.62 $\pm$ 0.32 & 0.69 $\pm$ 0.22 & 0.39 $\pm$ 0.21 & -0.21 $\pm$ 0.02 & -0.18 $\pm$ 0.00 & -0.18 $\pm$ 0.01 \\
\bottomrule
\end{tabular}\label{tab:overall_performance}
\end{table*}

\subsection{Practical Implementation of SeMOPO}\label{subsec:practical_imple}

Based on the above analysis, we present a practical implementation of \emph{Separated Model-based Offline Policy Optimization}. The overall method of SeMOPO is shown in~\Cref{fig:SeMOPO} and summarized in~\Cref{alg:SeMOPO}.

\textbf{Separated Model Learning.} \Cref{theorem:EXBMDP} allows us to achieve a significantly improved lower bound of the total return if we can learn a separate model and train the policy in the endogenous state space. By bifurcating the latent state $z$ into endogenous $(s^+)$ and exogenous $(s^-)$ parts, we obtain the following Evidence Lower Bound (ELBO):
\begin{equation*}
\begin{aligned}
    \max_{\theta}\;&
  \mathbb{E}\bigg[\sum_{t=1}^T \ln \mathcal{U}_{\theta}(o_t|s^+_t,s^-_t)\\
  &-\mathbb{D}_{KL}\big(\bar{q}_{\theta}(s^-_t|o_{\leq t})||\overline{T}_{\theta}(s^-_t|s^-_{t-1})\big)\\
  &-\mathbb{D}_{KL}\big(\tilde{q}_{\theta}(s^+_t|o_{\leq t},a_{<t})||\widetilde{T}_{\theta}(s^+_t|s^+_{t-1},a_{t-1})\big)\bigg].
\end{aligned}
\end{equation*}
where $\tilde{q}_{\theta}$ and $\bar{q}_{\theta}$ represent the inference models for the endogenous and exogenous states, respectively. Likewise, $\widetilde{T}_{\theta}$ and $\overline{T}$ denote the corresponding transition dynamics models, and $\mathcal{U}_{\theta}$ is the observation model which reconstructs the observation jointly from the endogenous and exogenous states. The derivation of the ELBO is detailed in~\cref{app:deriv}, with the implementation specifics of each model described in~\cref{app:imple}.

For the conservative sampling strategy outlined in~\cref{subsec:restricted_policy}, we design a simple but effective implementation. In the $m$-th training epoch, the SeMOPO's model is only trained on the sampled trajectory $\tau_j$ generated by a certain policy, where $j\le\min(m, n)$ and $n$ is the number of trajectories in the offline dataset. We provide an intuitive interpretation for this implementation: it forces the model to separate the task-relevant and irrelevant dynamics in the early training stages ($m\le n$); as training progresses ($m > n$), the model is trained on the trajectories sampled from the entire dataset to enhance the coverage of transitions. 

\textbf{Endogenous Model-based Offline Policy Optimization.} We train a policy $\pi_{\theta}$ and a value model $V_{\theta}$ in the endogenous state space of the learned model, following the standard training algorithm in DreamerV2~\cite{Dreamerv2}. To address the distribution shift issue, we estimate the model uncertainty through the discrepancy $d_{\mathcal{F}}(\widetilde{T}(s^+, a), T(s^+, a))$. Since we can not access the true dynamics $T$, we adopt a widely-used approach based on the model disagreements~\cite{Pathak2019SelfSupervisedEV}, which is implemented as a penalty term:
\begin{equation}\label{eq:uncertainty}
    \tilde{r}(s^+, a) = r(s^+, a)-\lambda\sum_{i=1}^K(\mu^i(s^+, a)-\bar{\mu}(s^+,a))^2
\end{equation}
Here, $\lambda$ is a coefficient for adjusting the penalty weight, and $\bar{\mu}(s^+, a)=\frac{1}{K}\sum_{i=1}^K\mu^i(s^+, a)$ is the average prediction across an ensemble of $K$ endogenous dynamics models.

\section{Experiments}\label{sec:exp}

We conduct several experiments to answer the following scientific questions: (1) Can SeMOPO outperform the existing methods with low-quality offline visual datasets? (2) Can SeMOPO give a reasonable model uncertainty estimation? (3) How does the sampling strategy for policy data affect the separated model training? (4) Can SeMOPO generalize to online environments with different distractors?

\textbf{Datasets.} 
To evaluate the efficacy of SeMOPO in the scenario of learning with low-quality offline visual datasets, we construct a dataset named LQV-D4RL (Low-Quality Vision Datasets for Deep Data-Driven RL). A ``low-quality'' dataset, in this context, refers to one where the data collection policy is either sub-optimal or randomly initialized, and the observations contain complex distractors with non-trivial dynamics. With these considerations in mind, we create nine distinct subsets for evaluation. We select four locomotion tasks — \textit{Walker Walk}, \textit{Cheetah Run}, \textit{Hopper Hop}, and \textit{Humanoid Walk} — from the DMControl Suite~\cite{DMC2018}, and the \textit{Car Racing} task from Gym~\cite{brockman2016openai}. Each task is represented across three different levels of policy performance:   
\begin{itemize}
    \item \textbf{random}: Trajectories collected by randomly initialized policies.
    \item \textbf{medium\_replay} (medrep): Trajectories drawn from the replay buffer accumulated during the training of a policy with medium performance.
    \item \textbf{medium}: Trajectories collected by a fixed policy of medium performance. 
\end{itemize}
The backgrounds of each locomotion task's observations are replaced with videos from the ``driving car'' category of the Kinetics dataset~\cite{Kinetics17}, as used in DBC~\cite{DBC21}. The size of image observation is $64\times 64\times 3$. To mimic the real data collection process in natural settings, we train policies and then collect trajectories based on image observations with the aforementioned distractors. Further details about the dataset can be found in~\cref{app:dataset}.

\textbf{Baselines.} We compare several representative methods in offline visual RL literature, dividing them into model-based and model-free categories. Model-based methods like \textbf{LOMPO}~\cite{rafailov2021offline} introduce a penalty term in the reward function based on model disagreements and optimize the policy in its constructed latent MDP, while \textbf{Offline DV2}~\cite{2022vd4rl} adapts the DreamerV2 method with a similar penalty for offline settings. Both of these two methods show great performance in offline visual RL tasks. In model-free approaches, Behavioral Cloning (BC)~\cite{BC95,Bratko1995BehaviouralCP} learns by imitating the behavior of the policy that collected the data, and Conservative Q-Learning (CQL)~\cite{Kumar2020ConservativeQF} samples actions from a broad distribution while penalizing those that fall outside the support region of the offline data. Remarkably, both BC and CQL are not originally tailored for scenarios involving high-dimensional image inputs. To address this, DrQ-v2's regularization techniques~\cite{DrQv2} are applied to BC and CQL in~\cite{2022vd4rl}, creating \textbf{DrQ+BC} and \textbf{DrQ+CQL} methods. These modified methods, alongside the original \textbf{BC}, are compared in our study to evaluate their effectiveness in offline visual RL tasks with low-quality datasets. Additionally, we compare the offline version of the \textbf{InfoGating} method~\cite{tomar2024ignorance}, a visual reinforcement learning approach that removes task-irrelevant noise by minimizing the information required for the task. We run all experiments on four seeds and report the normalized test return after training. The normalized return is obtained by normalizing the original return with the maximum and minimum values of the three levels of datasets for each task. The detailed calculation procedure is in~\Cref{app:normalized_return}.

\subsection{Evaluation on the LQV-D4RL Benchmark}\label{subsec:performance}

We evaluate SeMOPO against various baselines in nine scenarios within the LQV-D4RL benchmark. The results in~\cref{tab:overall_performance} consistently demonstrate SeMOPO's superior performance across diverse datasets, confirming the effectiveness of uncertainty estimation in the endogenous state space, especially in low-quality visual datasets. Significantly, SeMOPO outperforms model-free approaches in nearly all environments. However, BC-based approaches also perform well when the behavior policy of the dataset is reasonably effective. Model-based methods, Offline DV2 and LOMPO, show improved results when trained on random and medium\_replay datasets compared to medium datasets. This indicates that trajectories with random behaviors may provide a wider range of transitions, which benefits mitigating the distribution shift problem. This phenomenon aligns with previous research~\cite{Jin2020IsPP,Rashidinejad2021BridgingOR,offline2BC2022} and holds even in environments with complex noise in image inputs. Moreover, our findings highlight that the DrQ+BC and BC methods outperform Offline DV2 and LOMPO in several settings. This implies that inaccuracies in model uncertainty estimations can lead to worse performance of model-based methods than direct policy imitation. SeMOPO and all baseline methods have failed in the humanoid walk task. The visual reconstruction on our project website demonstrates that SeMOPO can effectively extract task information. Thus, task failure may be attributed to the inherently complex nature of the humanoid task, which involves controlling a high-dimensional action space of up to 21 dimensions. Particularly in our experiments, the challenge is further compounded by the high-dimensional image inputs and complex moving distractors in the background, significantly increasing learning difficulty. We anticipate that future researchers will focus on learning a high-performance policy for this task within noisy visual inputs. In addition, we assess these methods on the V-D4RL benchmark, as detailed in~\cref{app:overall_performance_vd4rl}, where observations do not include distractors.

\subsection{Can SeMOPO give a reasonable model uncertainty estimation?}\label{subsec:uncertainty}

\begin{figure}[tbp]
\centering
\includegraphics[width=\columnwidth]{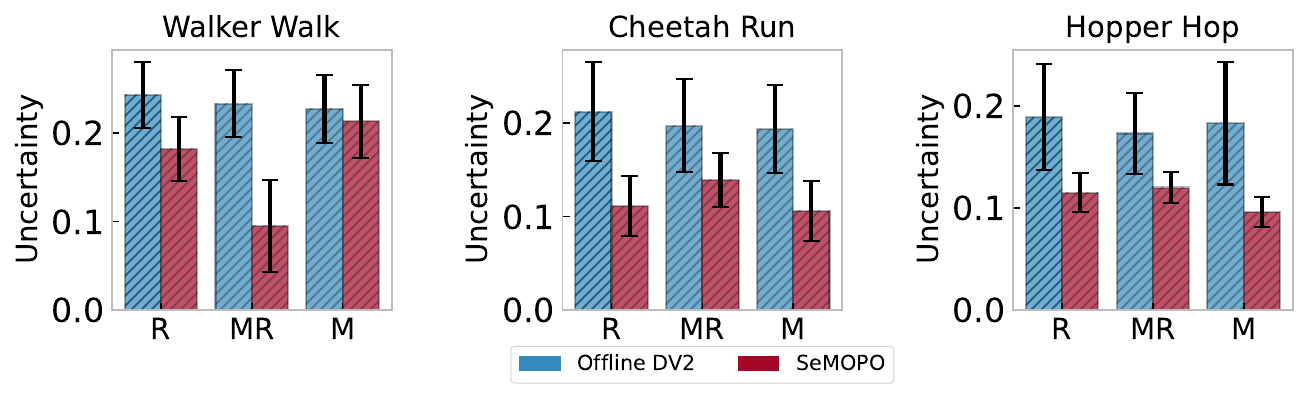}
\caption{The model uncertainty estimation of SeMOPO and Offline DV2 on the LQV-D4RL dataset. We randomly select 1000 states and report the mean and standard deviation of uncertainty on them.}
\label{fig:endog}
\end{figure}

\begin{figure}[tbp]
\centering
\includegraphics[width=\columnwidth]{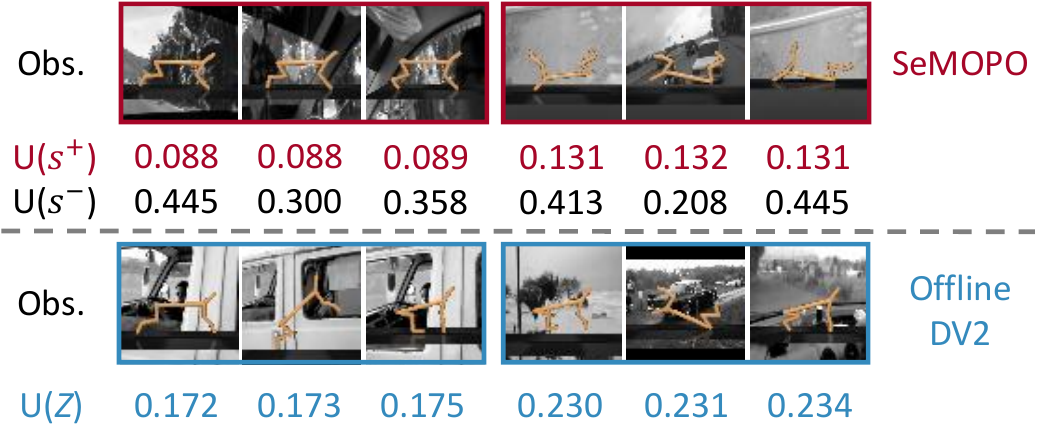}
\caption{Examples of uncertainty estimated by SeMOPO and Offline DV2. $U(s^+)$, $U(s^-)$, and $U(Z)$ represent uncertainty estimations for the endogenous state, the exogenous state, and the belief latent state, respectively.}
\label{fig:uncertainty_example}
\end{figure}

To answer the question, we compare the model uncertainty of SeMOPO and Offline DV2 on randomly selected states, as shown in~\cref{fig:endog}. We find that SeMOPO exhibits lower model uncertainty than Offline DV2 across nine datasets, confirming the conclusion of~\cref{theorem:EXBMDP}. Given that SeMOPO and Offline DV2 employ the same method for uncertainty calculation, the reduced uncertainty observed in SeMOPO can be ascribed to the endogenous state transition it introduced. We also show the uncertainty of exogenous state transitions in~\cref{app:uncertainty}.

\begin{figure}[tbp]
\vskip 0.1in
\begin{center}
\centerline{\includegraphics[width=\columnwidth]{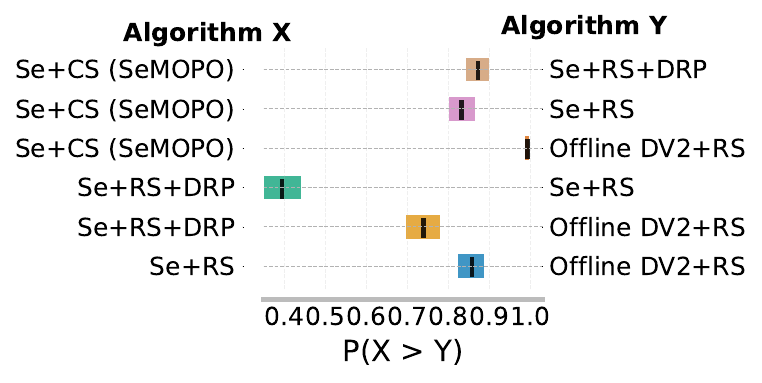}}
\caption{The performance comparison for ablated methods. Each row represents the comparative performance probabilities, complete with $95\%$ bootstrap confidence intervals, suggesting that Algorithm X is superior to Algorithm Y~\cite{Agarwal2021DeepRL}. These probabilities are derived from 50 runs of 4 seeds for every task to ensure robustness in the evaluation. We show the aggregated results for all nine tasks.}
\label{fig:ablation_pxy}
\end{center}
\vskip -0.1in
\end{figure}

To further illustrate the uncertainty estimation on unique states, we capture frames from the \textit{Cheetah Run} task and record the corresponding estimated uncertainty of SeMOPO and Offline DV2 in~\cref{fig:uncertainty_example}. SeMOPO displays lower model uncertainty on normal states as opposed to unpredictable abnormal states and shows different uncertainty of the exogenous states based on background disturbances. Offline DV2 provides similar uncertainty estimates for states with comparable background distractors but significantly different agent movements (illustrated in the three frames at the lower left corner of~\Cref{fig:uncertainty_example}). This occurs because Offline DV2 does not distinctly differentiate between endogenous and exogenous states, instead directly estimating model uncertainty on the latent states. The analysis of specific states reveals that SeMOPO can achieve reasonable model uncertainty estimation.

\begin{figure}[tbp]
\vskip 0.1in
\begin{center}
\centerline{\includegraphics[width=\columnwidth]{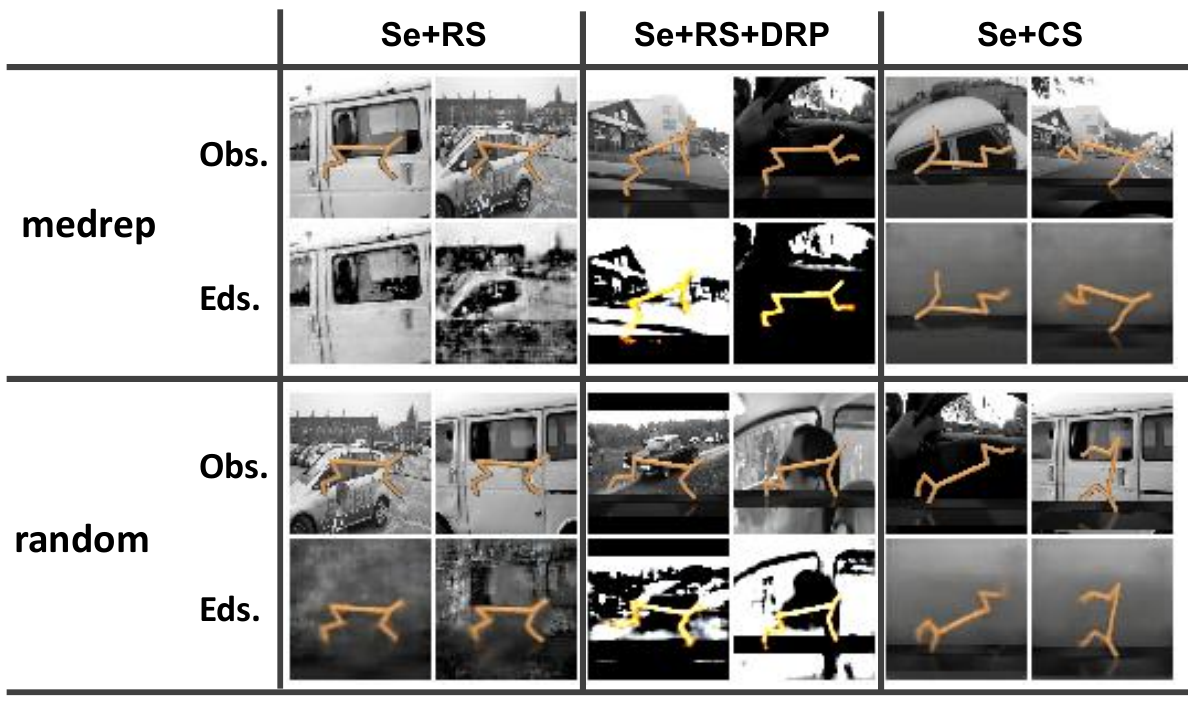}}
\caption{Original observations (Obs.) and image reconstructions from the endogenous states (Eds.) of different ablated methods in the medium\_replay and random datasets of the \textit{Cheetah Run} task. SeMOPO (Se+CS) can preserve task-relevant information well, while others can not.}
\label{fig:sampleing_res}
\end{center}
\vskip -0.1in
\end{figure}

\subsection{How does the sampling strategy for policy data affect the separated model training?}\label{subsec:sample}

To validate the efficacy of our proposed sampling method for training the separated model, we conduct a comparison involving several ablated methods: 1) Random sampling of all policy trajectories (Se + RS); 2) Random sampling combined with additional dissociated prediction loss\footnote{More information about dissociated reward prediction loss is in~\Cref{app:dissociated_reward}.} to hinder predicting the true reward from the exogenous state (Se + RS + DRP), as used in~\cite{Fu2021LearningTI}; 3) Our conservative sampling method (Se + CS (SeMOPO)). The ``Se'' indicates the application of these methods to the training of the separated model. We also compare these with the original Offline DV2 method trained under the random sampling (Offline DV2 + RS).

As illustrated in~\cref{fig:ablation_pxy}, SeMOPO demonstrates superior performances over other ablated methods, with a high probability ($>0.8$). There is no significant difference in performance between Se+RS and Se+RS+DRP ($\approx 0.4$), suggesting that the model trained by dissociated reward prediction may not benefit offline policy learning. Both methods, however, outperform the original Offline DV2, indicating the effectiveness of the separated model in addressing offline visual RL challenges with noisy observations. To further analyze these sampling strategies, we visualize the observation reconstruction of the endogenous state for each training method in~\cref{fig:sampleing_res}. The model trained with random sampling does not guarantee comprehensive task-related information in the endogenous state, especially in the medium\_replay dataset. While adding dissociated reward prediction loss to random sampling enriches task information in the endogenous state, it also introduces considerable distraction. Our conservative sampling method effectively reconstructs task-related information, ensuring the endogenous state contains only task-relevant details. 

To corroborate the theoretical claims made in~\cref{subsec:restricted_policy}, we compute the action entropy for the first 30 epochs using both RS and CS methods across all nine datasets. As shown in~\cref{fig:action_ent}, the CS method consistently results in lower action entropy compared to RS across different environments and datasets. Lower action entropy implies more deterministic policy behavior, aiding in distinguishing endogenous states from exogenous noise. Due to little action variations among different policies in the medium dataset, the difference in action entropy between RS and CS is less than the random and medium\_replay datasets. It explains why SeMOPO shows a greater advantage in the latter datasets, as described in~\Cref{tab:overall_performance}.

\begin{figure}[tbp]
\vskip 0.1in
\begin{center}
\centerline{\includegraphics[width=\columnwidth]{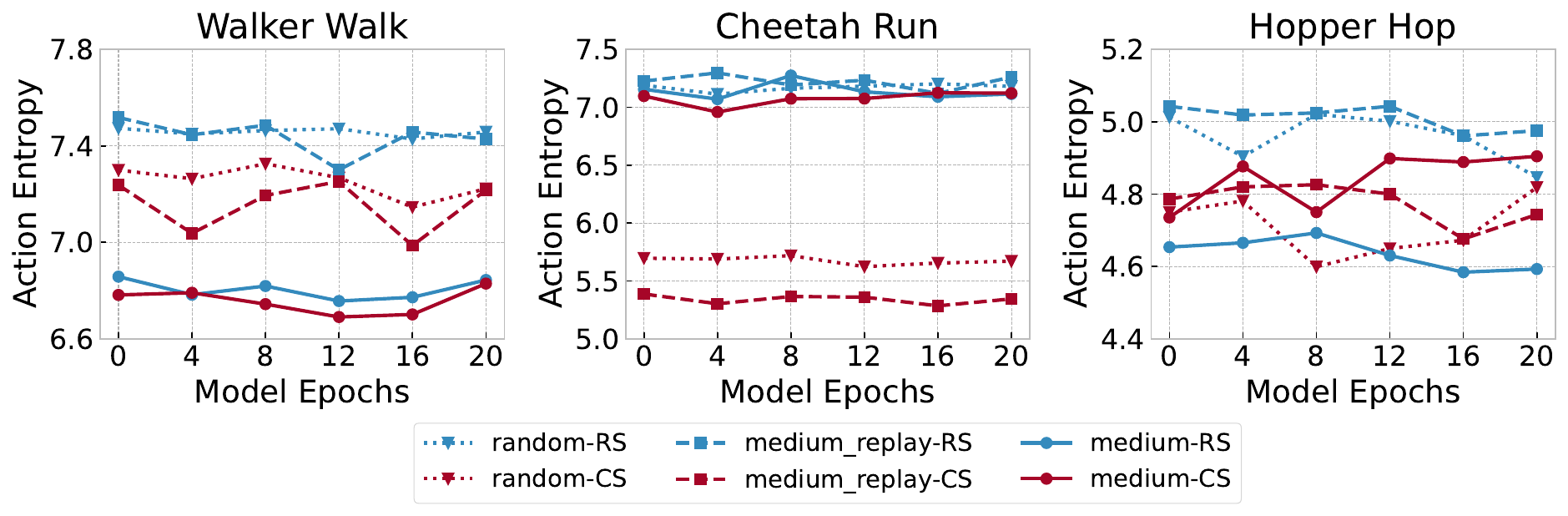}}
\caption{Action entropy of sampled data by random sampling (RS) and conservative sampling (CS). We record values of the first 30 training epochs across nine datasets of LQV-D4RL. The conservative sampling yields data with significantly lower action entropy than random sampling.}
\label{fig:action_ent}
\end{center}
\vskip -0.1in
\end{figure}

\subsection{Can SeMOPO generalize to online environments with different distractors?}\label{subsec:generalize}

In this section, we evaluate the effectiveness of SeMOPO in addressing the ``offline-to-online distraction gap" challenge. We alter the online testing environment by introducing an RGB background, starkly contrasting with the grayscale background of the LQV-D4RL dataset. To assess the methods' adaptability to distinct online visual distractions, we compared SeMOPO with Offline DV2 and DrQ+BC in two distinct environments: \textit{Walker Walk} and \textit{Cheetah Run}, each with two datasets - ``random" and ``medrep". As shown in~\Cref{tab:transfer}, Offline DV2 performs better than DrQ+BC in the \textit{Walker Walk} task, but the reverse is true for the \textit{Cheetah Run} task. However, both are outperformed by SeMOPO, demonstrating SeMOPO's clear advantage in managing the distraction gap. 

\setlength\tabcolsep{0.8pt}
\begin{table}[tbp]
\centering
     \caption{Normalized test returns on tasks with different online distractions from offline datasets.}
	\begin{tabular}{lcccc}
		\toprule
		\multicolumn{2}{c}{\textbf{Dataset}}     & \multicolumn{1}{c}{\textbf{SeMOPO}} & \multicolumn{1}{c}{\textbf{Offline DV2}} & \multicolumn{1}{c}{\textbf{DrQ+BC}} \\ \midrule
		\multirow{2}{*}{Walker Walk} & random         & \textbf{0.78 $\pm$ 0.14} & 0.33 $\pm$ 0.05 &  0.05 $\pm$ 0.01 \\
		                             & medrep          & \textbf{0.50 $\pm$ 0.02} & 0.30 $\pm$ 0.05 & 0.05 $\pm$ 0.02 \\
		\multirow{2}{*}{Cheetah Run} & random         & \textbf{0.65 $\pm$ 0.05}  & 0.16 $\pm$ 0.04 & 0.45 $\pm$ 0.05 \\
		                             & medrep          &  \textbf{0.59 $\pm$ 0.06} & 0.17 $\pm$ 0.06 & 0.49 $\pm$ 0.14  \\ \bottomrule
	\end{tabular}
\label{tab:transfer}
\end{table}

\section{Related Work}
\label{sec:relatedwork}

\textbf{Offline RL.} Offline RL~\cite{lange2012batch,Levine2020OfflineRL} aims to learn a policy from an offline dataset of trajectories for deployment in an online environment. It has been applied to various domains, such as robotic grasping~\cite{2018QTOptSD,2021MTOptCM}, healthcare~\cite{Shortreed2010InformingSC,Wang2018SupervisedRL}, and autonomous driving~\cite{offlinerl_autodriving2021,offlinerl_autodriving2023}. The primary challenge in offline RL is the distribution shift~\cite{Fujimoto2018OffPolicyDR,Kumar2019StabilizingOQ}, where the behavioral policy's data diverges from the data distribution in the actual online environment. Methods to address distribution shift broadly fall into two categories: (1) \textit{policy-constraints} methods that restrict the learned policy to align closely with the behavior policy generating the dataset~\cite{Fujimoto2018OffPolicyDR,Kumar2019StabilizingOQ,Liu2019OffPolicyPG,Nachum2019AlgaeDICEPG,Peng2019AdvantageWeightedRS,Siegel2020KeepDW,Fujimoto2021AMA}, thus avoiding unexpected actions; (2) \textit{conservative} methods~\cite{Kumar2020ConservativeQF,morel20,Kostrikov2021OfflineRL,yu2020mopo} that construct a conservative return estimate, enhancing the learned policy's robustness against distribution shift. Previous studies have shown offline RL's efficacy even with random or sub-optimal datasets~\cite{Jin2020IsPP,Zanette2020ExponentialLB,Wang2020WhatAT,Rashidinejad2021BridgingOR}. However, there is limited research on the performance of offline RL in scenarios where the observation data is noisy, particularly when learning from high-dimensional image inputs with complex distractors. Our work focuses on learning a policy from such low-quality offline visual RL datasets to effectively tackle related tasks.

\textbf{Control with Noisy Observations.} Real-world control tasks often involve observations that contain irrelevant noise. Recent studies that focused on learning policies from noisy image observations, can be categorized into four types: (1) separating task-relevant and irrelevant information based on key factors (actions, rewards, etc.)~\cite{Fu2021LearningTI,Wang2022DenoisedML,Pan2022IsolatingAL,Liu2023LearningWM}; (2) learning task-related representations through bisimulation metrics~\cite{DBC21,Liu2023RobustRL}; (3) mitigating exogenous noises via data augmentation methods~\cite{Kostrikov2020DrQ,hansen2021stabilizing,Fan2021DRIBO,Yuan2022DontTW,Bertoin2022LookWY,huang2022spectrum}; (4) extracting task-related features using auxiliary prediction tasks~\cite{yang2015auxiliary,Badia2020Agent57OT,Baker2022VideoP,Efroni2022SampleEfficientRL,Lamb2022GuaranteedDO}. These methods primarily target training policies in online environments, where additional data can be gathered to differentiate between task-relevant and irrelevant information. However, this issue has been less explored in offline visual RL. The V-D4RL benchmark dataset~\cite{2022vd4rl} involves only two related subtasks and does not offer a specific approach for handling noisy observations. Our method distinctively trains a separated state-space model from offline noisy data using a conservative sampling method and learns a high-performance policy.

\textbf{Sampling Strategy in Offline RL.}
Sampling strategies in offline RL aim to improve the performance of the learning agent by optimally selecting data from the dataset. One approach involves using uncertainty estimation of the Q-Value function to guide sampling~\cite{Kumar2022OfflineRR}, which allows for a better exploration-exploitation balance. Another strategy is Offline Prioritized Experience Replay (OPER)~\cite{Hong2023BeyondUS}, which assigns weights to transitions based on their normalized advantage, prioritizing those with higher rewards. Rank-Based Sampling (RBS)~\cite{Shen2021ARS} adopts a similar way, sampling transitions with high returns more frequently. Policy Regularization with Dataset Constraint (PRDC)~\cite{Ran2023PRDC} limits the policy towards the closest state-action pair in the dataset, avoiding out-of-distribution actions. To address the imbalance dataset issue, RB-CQL~\cite{Jiang2023OfflineRL} utilizes a retrieval process to use related experiences effectively. Unlike these methods, which directly target  policy performance improvement, our proposed conservative sampling strategy focuses on helping the model distinguish between task-relevant and irrelevant information.

\section{Conclusion}
\label{sec:conclusion}

In this paper, we consider the problem of learning a policy from offline visual datasets where the observations contain non-trivial distractors and the behavior policies that generate the dataset are either random or sub-optimal. To simulate this scenario, we construct the LQV-D4RL benchmark. We provide a theoretical analysis of the lower performance bound under the EX-BMDP assumption which is tighter than that under the POMDP in some specific cases. We propose a conservative sampling method to facilitate the decomposition of endogenous and exogenous states. Guided by this theoretical framework, we introduce the Separated Model-based Offline Policy Optimization (SeMOPO) method. Experimental results on the LQV-D4RL dataset indicate that SeMOPO outperforms other offline visual RL methods. Further experiments validate the theory's applicability and highlight the significance of each component within our method. Generalization experiments show SeMOPO's ability to handle variations between online and offline environmental disturbances. 

\textbf{Limitations and Future Work.} Our study adopts the independence assumption under the EX-BMDP for the transitions of endogenous and exogenous states. However, potential interactions between these states in real-world tasks may present a promising avenue for future research.

\section*{Impact Statement}

This paper presents work that aims to advance the field of Machine Learning. There are many potential societal consequences of our work, none of which we feel must be specifically highlighted here.

\section*{Acknowledgements}

We thank Han-Jia Ye, Shaowei Zhang, Minghao Shao, Hai-Hang Sun, Kaichen Huang, Yucen Wang, and Jiayi Wu for their valuable discussions. This work was partially supported by the National Science and Technology Major Project under Grant No. 2022ZD0114805, Collaborative Innovation
Center of Novel Software Technology and Industrialization, NSFC (62376118,
62006112, 62250069, 61921006), the Postgraduate Research \& Practice Innovation Program of Jiangsu Province, and the National Natural Science Foundation of China (No. 123B2009).


\bibliography{example_paper}

\begin{thebibliography}{67}
\providecommand{\natexlab}[1]{#1}
\providecommand{\url}[1]{\texttt{#1}}
\expandafter\ifx\csname urlstyle\endcsname\relax
  \providecommand{\doi}[1]{doi: #1}\else
  \providecommand{\doi}{doi: \begingroup \urlstyle{rm}\Url}\fi

\bibitem[Agarwal et~al.(2021)Agarwal, Schwarzer, Castro, Courville, and Bellemare]{Agarwal2021DeepRL}
Agarwal, R., Schwarzer, M., Castro, P.~S., Courville, A.~C., and Bellemare, M.~G.
\newblock Deep reinforcement learning at the edge of the statistical precipice.
\newblock In \emph{Neural Information Processing Systems}, 2021.

\bibitem[Badia et~al.(2020)Badia, Piot, Kapturowski, Sprechmann, Vitvitskyi, Guo, and Blundell]{Badia2020Agent57OT}
Badia, A.~P., Piot, B., Kapturowski, S., Sprechmann, P., Vitvitskyi, A., Guo, D., and Blundell, C.
\newblock Agent57: Outperforming the atari human benchmark.
\newblock In \emph{International Conference on Machine Learning}, 2020.

\bibitem[Bain \& Sammut(1995)Bain and Sammut]{BC95}
Bain, M. and Sammut, C.
\newblock A framework for behavioural cloning.
\newblock In \emph{Machine Intelligence 15, Intelligent Agents [St. Catherine's College, Oxford, UK, July 1995]}, pp.\  103--129. Oxford University Press, 1995.

\bibitem[Baker et~al.(2022)Baker, Akkaya, Zhokhov, Huizinga, Tang, Ecoffet, Houghton, Sampedro, and Clune]{Baker2022VideoP}
Baker, B., Akkaya, I., Zhokhov, P., Huizinga, J., Tang, J., Ecoffet, A., Houghton, B., Sampedro, R., and Clune, J.
\newblock Video pretraining (vpt): Learning to act by watching unlabeled online videos.
\newblock \emph{ArXiv}, abs/2206.11795, 2022.

\bibitem[Bertoin et~al.(2022)Bertoin, Zouitine, Zouitine, and Rachelson]{Bertoin2022LookWY}
Bertoin, D., Zouitine, A., Zouitine, M., and Rachelson, E.
\newblock Look where you look! saliency-guided q-networks for visual rl tasks.
\newblock \emph{ArXiv}, abs/2209.09203, 2022.

\bibitem[Bratko et~al.(1995)Bratko, Urbancic, and Sammut]{Bratko1995BehaviouralCP}
Bratko, I., Urbancic, T., and Sammut, C.
\newblock Behavioural cloning: Phenomena, results and problems.
\newblock \emph{IFAC Proceedings Volumes}, 28:\penalty0 143--149, 1995.

\bibitem[Brockman et~al.(2016)Brockman, Cheung, Pettersson, Schneider, Schulman, Tang, and Zaremba]{brockman2016openai}
Brockman, G., Cheung, V., Pettersson, L., Schneider, J., Schulman, J., Tang, J., and Zaremba, W.
\newblock Open{AI} gym.
\newblock \emph{ArXiv}, abs/1606.01540, 2016.

\bibitem[Diehl et~al.(2023)Diehl, Sievernich, Krüger, Hoffmann, and Bertram]{offlinerl_autodriving2023}
Diehl, C.~E., Sievernich, T., Krüger, M., Hoffmann, F., and Bertram, T.
\newblock Uncertainty-aware model-based offline reinforcement learning for automated driving.
\newblock \emph{IEEE robotics and automation letters}, 2023.
\newblock \doi{10.1109/LRA.2023.3236579}.

\bibitem[Du et~al.(2019)Du, Krishnamurthy, Jiang, Agarwal, Dud{\'i}k, and Langford]{BMDP19}
Du, S.~S., Krishnamurthy, A., Jiang, N., Agarwal, A., Dud{\'i}k, M., and Langford, J.
\newblock Provably efficient rl with rich observations via latent state decoding.
\newblock \emph{ArXiv}, abs/1901.09018, 2019.

\bibitem[Efroni et~al.(2021)Efroni, Misra, Krishnamurthy, Agarwal, and Langford]{EXBMDP22}
Efroni, Y., Misra, D., Krishnamurthy, A., Agarwal, A., and Langford, J.
\newblock Provable {RL} with exogenous distractors via multistep inverse dynamics.
\newblock \emph{ArXiv}, abs/2110.08847, 2021.

\bibitem[Efroni et~al.(2022)Efroni, Foster, Misra, Krishnamurthy, and Langford]{Efroni2022SampleEfficientRL}
Efroni, Y., Foster, D.~J., Misra, D.~K., Krishnamurthy, A., and Langford, J.
\newblock Sample-efficient reinforcement learning in the presence of exogenous information.
\newblock In \emph{Annual Conference Computational Learning Theory}, 2022.

\bibitem[Fan \& Li(2021)Fan and Li]{Fan2021DRIBO}
Fan, J. and Li, W.
\newblock Robust deep reinforcement learning via multi-view information bottleneck.
\newblock \emph{ArXiv}, abs/2102.13268, 2021.

\bibitem[Fu et~al.(2021)Fu, Yang, Agrawal, and Jaakkola]{Fu2021LearningTI}
Fu, X., Yang, G., Agrawal, P., and Jaakkola, T.
\newblock Learning task informed abstractions.
\newblock In \emph{International Conference on Machine Learning}, 2021.

\bibitem[Fujimoto \& Gu(2021)Fujimoto and Gu]{Fujimoto2021AMA}
Fujimoto, S. and Gu, S.~S.
\newblock A minimalist approach to offline reinforcement learning.
\newblock \emph{ArXiv}, abs/2106.06860, 2021.

\bibitem[Fujimoto et~al.(2018)Fujimoto, Meger, and Precup]{Fujimoto2018OffPolicyDR}
Fujimoto, S., Meger, D., and Precup, D.
\newblock Off-policy deep reinforcement learning without exploration.
\newblock In \emph{International Conference on Machine Learning}, 2018.

\bibitem[Ganin \& Lempitsky(2015)Ganin and Lempitsky]{UDA15}
Ganin, Y. and Lempitsky, V.~S.
\newblock Unsupervised domain adaptation by backpropagation.
\newblock In Bach, F.~R. and Blei, D.~M. (eds.), \emph{Proceedings of the 32nd International Conference on Machine Learning, {ICML} 2015, Lille, France, 6-11 July 2015}, volume~37 of \emph{{JMLR} Workshop and Conference Proceedings}, pp.\  1180--1189. JMLR.org, 2015.

\bibitem[Ha \& Schmidhuber(2018)Ha and Schmidhuber]{world_models18}
Ha, D. and Schmidhuber, J.
\newblock World models.
\newblock \emph{ArXiv}, abs/1803.10122, 2018.

\bibitem[Hafner et~al.(2019)Hafner, Lillicrap, Fischer, Villegas, Ha, Lee, and Davidson]{planet19}
Hafner, D., Lillicrap, T.~P., Fischer, I., Villegas, R., Ha, D., Lee, H., and Davidson, J.
\newblock Learning latent dynamics for planning from pixels.
\newblock In Chaudhuri, K. and Salakhutdinov, R. (eds.), \emph{Proceedings of the 36th International Conference on Machine Learning, {ICML} 2019, 9-15 June 2019, Long Beach, California, {USA}}, volume~97 of \emph{Proceedings of Machine Learning Research}, pp.\  2555--2565. {PMLR}, 2019.

\bibitem[Hafner et~al.(2021)Hafner, Lillicrap, Norouzi, and Ba]{Dreamerv2}
Hafner, D., Lillicrap, T.~P., Norouzi, M., and Ba, J.
\newblock Mastering {A}tari with discrete world models.
\newblock In \emph{9th International Conference on Learning Representations, {ICLR} 2021, Virtual Event, Austria, May 3-7, 2021}. OpenReview.net, 2021.

\bibitem[Hansen et~al.(2021)Hansen, Su, and Wang]{hansen2021stabilizing}
Hansen, N., Su, H., and Wang, X.
\newblock Stabilizing deep q-learning with convnets and vision transformers under data augmentation.
\newblock \emph{Advances in neural information processing systems}, 34:\penalty0 3680--3693, 2021.

\bibitem[Hong et~al.(2023)Hong, Kumar, Karnik, Bhandwaldar, Srivastava, Pajarinen, Laroche, Gupta, and Agrawal]{Hong2023BeyondUS}
Hong, Z.-W., Kumar, A., Karnik, S., Bhandwaldar, A., Srivastava, A., Pajarinen, J., Laroche, R., Gupta, A., and Agrawal, P.
\newblock Beyond uniform sampling: Offline reinforcement learning with imbalanced datasets.
\newblock \emph{ArXiv}, abs/2310.04413, 2023.

\bibitem[Huang et~al.(2022)Huang, Peng, Zhao, Chen, and Tian]{huang2022spectrum}
Huang, Y., Peng, P., Zhao, Y., Chen, G., and Tian, Y.
\newblock Spectrum random masking for generalization in image-based reinforcement learning.
\newblock \emph{Advances in Neural Information Processing Systems}, 35:\penalty0 20393--20406, 2022.

\bibitem[Iroegbu \& Madhavi(2021)Iroegbu and Madhavi]{offlinerl_autodriving2021}
Iroegbu, E.~I. and Madhavi, D.
\newblock Accelerating the training of deep reinforcement learning in autonomous driving.
\newblock \emph{IAES International Journal of Artificial Intelligence}, 2021.
\newblock \doi{10.11591/IJAI.V10.I3.PP649-656}.

\bibitem[Jiang et~al.(2023)Jiang, Cheng, Qiu, Xu, Chan, and Ding]{Jiang2023OfflineRL}
Jiang, L., Cheng, S., Qiu, J., Xu, H., Chan, W. K.~V., and Ding, Z.
\newblock Offline reinforcement learning with imbalanced datasets.
\newblock \emph{ArXiv}, abs/2307.02752, 2023.

\bibitem[Jin et~al.(2020)Jin, Yang, and Wang]{Jin2020IsPP}
Jin, Y., Yang, Z., and Wang, Z.
\newblock Is pessimism provably efficient for offline rl?
\newblock In \emph{International Conference on Machine Learning}, 2020.

\bibitem[Kaelbling et~al.(1998)Kaelbling, Littman, and Cassandra]{pomdp}
Kaelbling, L.~P., Littman, M.~L., and Cassandra, A.~R.
\newblock Planning and acting in partially observable stochastic domains.
\newblock \emph{Artificial Intelligence}, 101:\penalty0 99--134, 1998.

\bibitem[Kalashnikov et~al.(2018)Kalashnikov, Irpan, Pastor, Ibarz, Herzog, Jang, Quillen, Holly, Kalakrishnan, Vanhoucke, and Levine]{2018QTOptSD}
Kalashnikov, D., Irpan, A., Pastor, P., Ibarz, J., Herzog, A., Jang, E., Quillen, D., Holly, E., Kalakrishnan, M., Vanhoucke, V., and Levine, S.
\newblock Qt-opt: Scalable deep reinforcement learning for vision-based robotic manipulation.
\newblock \emph{ArXiv}, abs/1806.10293, 2018.

\bibitem[Kalashnikov et~al.(2021)Kalashnikov, Varley, Chebotar, Swanson, Jonschkowski, Finn, Levine, and Hausman]{2021MTOptCM}
Kalashnikov, D., Varley, J., Chebotar, Y., Swanson, B., Jonschkowski, R., Finn, C., Levine, S., and Hausman, K.
\newblock Mt-opt: Continuous multi-task robotic reinforcement learning at scale.
\newblock \emph{ArXiv}, abs/2104.08212, 2021.

\bibitem[Kay et~al.(2017)Kay, Carreira, Simonyan, Zhang, Hillier, Vijayanarasimhan, Viola, Green, Back, Natsev, Suleyman, and Zisserman]{Kinetics17}
Kay, W., Carreira, J., Simonyan, K., Zhang, B., Hillier, C., Vijayanarasimhan, S., Viola, F., Green, T., Back, T., Natsev, P., Suleyman, M., and Zisserman, A.
\newblock The kinetics human action video dataset.
\newblock \emph{ArXiv}, abs/1705.06950, 2017.

\bibitem[Kidambi et~al.(2020)Kidambi, Rajeswaran, Netrapalli, and Joachims]{morel20}
Kidambi, R., Rajeswaran, A., Netrapalli, P., and Joachims, T.
\newblock Morel: Model-based offline reinforcement learning.
\newblock In \emph{Advances in Neural Information Processing Systems 33: Annual Conference on Neural Information Processing Systems 2020, NeurIPS 2020, December 6-12, 2020, virtual}, 2020.

\bibitem[Kostrikov et~al.(2020)Kostrikov, Yarats, and Fergus]{Kostrikov2020DrQ}
Kostrikov, I., Yarats, D., and Fergus, R.
\newblock Image augmentation is all you need: Regularizing deep reinforcement learning from pixels.
\newblock \emph{ArXiv}, abs/2004.13649, 2020.

\bibitem[Kostrikov et~al.(2021)Kostrikov, Tompson, Fergus, and Nachum]{Kostrikov2021OfflineRL}
Kostrikov, I., Tompson, J., Fergus, R., and Nachum, O.
\newblock Offline reinforcement learning with fisher divergence critic regularization.
\newblock \emph{ArXiv}, abs/2103.08050, 2021.

\bibitem[Kumar \& Kuzovkin(2022)Kumar and Kuzovkin]{Kumar2022OfflineRR}
Kumar, A. and Kuzovkin, I.
\newblock Offline robot reinforcement learning with uncertainty-guided human expert sampling.
\newblock \emph{ArXiv}, abs/2212.08232, 2022.

\bibitem[Kumar et~al.(2019)Kumar, Fu, Tucker, and Levine]{Kumar2019StabilizingOQ}
Kumar, A., Fu, J., Tucker, G., and Levine, S.
\newblock Stabilizing off-policy q-learning via bootstrapping error reduction.
\newblock In \emph{Neural Information Processing Systems}, 2019.

\bibitem[Kumar et~al.(2020)Kumar, Zhou, Tucker, and Levine]{Kumar2020ConservativeQF}
Kumar, A., Zhou, A., Tucker, G., and Levine, S.
\newblock Conservative q-learning for offline reinforcement learning.
\newblock \emph{ArXiv}, abs/2006.04779, 2020.

\bibitem[Kumar et~al.(2022)Kumar, Hong, Singh, and Levine]{offline2BC2022}
Kumar, A., Hong, J., Singh, A., and Levine, S.
\newblock When should we prefer offline reinforcement learning over behavioral cloning?
\newblock \emph{ArXiv}, abs/2204.05618, 2022.

\bibitem[Lamb et~al.(2022)Lamb, Islam, Efroni, Didolkar, Misra, Foster, Molu, Chari, Krishnamurthy, and Langford]{Lamb2022GuaranteedDO}
Lamb, A., Islam, R., Efroni, Y., Didolkar, A., Misra, D.~K., Foster, D.~J., Molu, L., Chari, R., Krishnamurthy, A., and Langford, J.
\newblock Guaranteed discovery of control-endogenous latent states with multi-step inverse models.
\newblock \emph{Trans. Mach. Learn. Res.}, 2023, 2022.

\bibitem[Lange et~al.(2012)Lange, Gabel, and Riedmiller]{lange2012batch}
Lange, S., Gabel, T., and Riedmiller, M.
\newblock Batch reinforcement learning.
\newblock In \emph{Reinforcement learning: State-of-the-art}, pp.\  45--73. Springer, 2012.

\bibitem[Levine et~al.(2020)Levine, Kumar, Tucker, and Fu]{Levine2020OfflineRL}
Levine, S., Kumar, A., Tucker, G., and Fu, J.
\newblock Offline reinforcement learning: Tutorial, review, and perspectives on open problems.
\newblock \emph{ArXiv}, abs/2005.01643, 2020.

\bibitem[Liu et~al.(2023{\natexlab{a}})Liu, Zhou, Yang, and Wang]{Liu2023RobustRL}
Liu, Q., Zhou, Q., Yang, R., and Wang, J.
\newblock Robust representation learning by clustering with bisimulation metrics for visual reinforcement learning with distractions.
\newblock In \emph{AAAI Conference on Artificial Intelligence}, 2023{\natexlab{a}}.

\bibitem[Liu et~al.(2019)Liu, Swaminathan, Agarwal, and Brunskill]{Liu2019OffPolicyPG}
Liu, Y., Swaminathan, A., Agarwal, A., and Brunskill, E.
\newblock Off-policy policy gradient with stationary distribution correction.
\newblock \emph{ArXiv}, abs/1904.08473, 2019.

\bibitem[Liu et~al.(2023{\natexlab{b}})Liu, Huang, Zhu, Tian, Gong, Yu, and Zhang]{Liu2023LearningWM}
Liu, Y.-R., Huang, B., Zhu, Z., Tian, H., Gong, M., Yu, Y., and Zhang, K.
\newblock Learning world models with identifiable factorization.
\newblock \emph{ArXiv}, abs/2306.06561, 2023{\natexlab{b}}.

\bibitem[Lu et~al.(2022)Lu, Ball, Rudner, Parker{-}Holder, Osborne, and Teh]{2022vd4rl}
Lu, C., Ball, P.~J., Rudner, T. G.~J., Parker{-}Holder, J., Osborne, M.~A., and Teh, Y.~W.
\newblock Challenges and opportunities in offline reinforcement learning from visual observations.
\newblock \emph{ArXiv}, abs/2206.04779, 2022.

\bibitem[Luo et~al.(2019)Luo, Xu, Li, Tian, Darrell, and Ma]{telesopinglemma19}
Luo, Y., Xu, H., Li, Y., Tian, Y., Darrell, T., and Ma, T.
\newblock Algorithmic framework for model-based deep reinforcement learning with theoretical guarantees.
\newblock In \emph{7th International Conference on Learning Representations, {ICLR} 2019, New Orleans, LA, USA, May 6-9, 2019}. OpenReview.net, 2019.

\bibitem[Nachum et~al.(2019)Nachum, Dai, Kostrikov, Chow, Li, and Schuurmans]{Nachum2019AlgaeDICEPG}
Nachum, O., Dai, B., Kostrikov, I., Chow, Y., Li, L., and Schuurmans, D.
\newblock Algaedice: Policy gradient from arbitrary experience.
\newblock \emph{ArXiv}, abs/1912.02074, 2019.

\bibitem[Pan et~al.(2022)Pan, Zhu, Wang, and Yang]{Pan2022IsolatingAL}
Pan, M., Zhu, X., Wang, Y., and Yang, X.
\newblock Isolating and leveraging controllable and noncontrollable visual dynamics in world models.
\newblock \emph{ArXiv}, abs/2205.13817, 2022.

\bibitem[Pathak et~al.(2019)Pathak, Gandhi, and Gupta]{Pathak2019SelfSupervisedEV}
Pathak, D., Gandhi, D., and Gupta, A.~K.
\newblock Self-supervised exploration via disagreement.
\newblock \emph{ArXiv}, abs/1906.04161, 2019.

\bibitem[Peng et~al.(2019)Peng, Kumar, Zhang, and Levine]{Peng2019AdvantageWeightedRS}
Peng, X.~B., Kumar, A., Zhang, G., and Levine, S.
\newblock Advantage-weighted regression: Simple and scalable off-policy reinforcement learning.
\newblock \emph{ArXiv}, abs/1910.00177, 2019.

\bibitem[Prudencio et~al.(2022)Prudencio, M{\'{a}}ximo, and Colombini]{offline_survey2022}
Prudencio, R.~F., M{\'{a}}ximo, M. R. O.~A., and Colombini, E.~L.
\newblock A survey on offline reinforcement learning: Taxonomy, review, and open problems.
\newblock \emph{ArXiv}, abs/2203.01387, 2022.

\bibitem[Rafailov et~al.(2021)Rafailov, Yu, Rajeswaran, and Finn]{rafailov2021offline}
Rafailov, R., Yu, T., Rajeswaran, A., and Finn, C.
\newblock Offline reinforcement learning from images with latent space models.
\newblock In \emph{Learning for Dynamics and Control}, pp.\  1154--1168. PMLR, 2021.

\bibitem[Ran et~al.(2023)Ran, Li, Zhang, Zhang, and Yu]{Ran2023PRDC}
Ran, Y., Li, Y., Zhang, F., Zhang, Z., and Yu, Y.
\newblock Policy regularization with dataset constraint for offline reinforcement learning.
\newblock In \emph{International Conference on Machine Learning}, 2023.

\bibitem[Rashidinejad et~al.(2021)Rashidinejad, Zhu, Ma, Jiao, and Russell]{Rashidinejad2021BridgingOR}
Rashidinejad, P., Zhu, B., Ma, C., Jiao, J., and Russell, S.~J.
\newblock Bridging offline reinforcement learning and imitation learning: A tale of pessimism.
\newblock \emph{IEEE Transactions on Information Theory}, 68:\penalty0 8156--8196, 2021.

\bibitem[Shen et~al.(2021)Shen, Fang, Xu, Cao, and Zhu]{Shen2021ARS}
Shen, Y., Fang, Z., Xu, Y., Cao, Y., and Zhu, J.
\newblock A rank-based sampling framework for offline reinforcement learning.
\newblock \emph{2021 IEEE International Conference on Computer Science, Electronic Information Engineering and Intelligent Control Technology (CEI)}, pp.\  197--202, 2021.

\bibitem[Shortreed et~al.(2010)Shortreed, Laber, Lizotte, Stroup, Pineau, and Murphy]{Shortreed2010InformingSC}
Shortreed, S.~M., Laber, E.~B., Lizotte, D.~J., Stroup, T.~S., Pineau, J., and Murphy, S.~A.
\newblock Informing sequential clinical decision-making through reinforcement learning: an empirical study.
\newblock \emph{Machine Learning}, 84:\penalty0 109--136, 2010.

\bibitem[Siegel et~al.(2020)Siegel, Springenberg, Berkenkamp, Abdolmaleki, Neunert, Lampe, Hafner, and Riedmiller]{Siegel2020KeepDW}
Siegel, N., Springenberg, J.~T., Berkenkamp, F., Abdolmaleki, A., Neunert, M., Lampe, T., Hafner, R., and Riedmiller, M.~A.
\newblock Keep doing what worked: Behavioral modelling priors for offline reinforcement learning.
\newblock \emph{ArXiv}, abs/2002.08396, 2020.

\bibitem[Tassa et~al.(2018)Tassa, Doron, Muldal, Erez, Li, de~Las~Casas, Budden, Abdolmaleki, Merel, Lefrancq, Lillicrap, and Riedmiller]{DMC2018}
Tassa, Y., Doron, Y., Muldal, A., Erez, T., Li, Y., de~Las~Casas, D., Budden, D., Abdolmaleki, A., Merel, J., Lefrancq, A., Lillicrap, T.~P., and Riedmiller, M.~A.
\newblock Deep{M}ind control suite.
\newblock \emph{ArXiv}, abs/1801.00690, 2018.

\bibitem[Tomar et~al.(2024)Tomar, Islam, Taylor, Levine, and Bachman]{tomar2024ignorance}
Tomar, M., Islam, R., Taylor, M., Levine, S., and Bachman, P.
\newblock Ignorance is bliss: Robust control via information gating.
\newblock \emph{Advances in Neural Information Processing Systems}, 36, 2024.

\bibitem[Trabucco et~al.(2022)Trabucco, Geng, Kumar, and Levine]{Designbench22}
Trabucco, B., Geng, X., Kumar, A., and Levine, S.
\newblock Design-bench: Benchmarks for data-driven offline model-based optimization.
\newblock In Chaudhuri, K., Jegelka, S., Song, L., Szepesv{\'{a}}ri, C., Niu, G., and Sabato, S. (eds.), \emph{International Conference on Machine Learning, {ICML} 2022, 17-23 July 2022, Baltimore, Maryland, {USA}}, volume 162 of \emph{Proceedings of Machine Learning Research}, pp.\  21658--21676. {PMLR}, 2022.

\bibitem[Wang et~al.(2018)Wang, Zhang, He, and Zha]{Wang2018SupervisedRL}
Wang, L., Zhang, W., He, X., and Zha, H.
\newblock Supervised reinforcement learning with recurrent neural network for dynamic treatment recommendation.
\newblock \emph{Proceedings of the 24th ACM SIGKDD International Conference on Knowledge Discovery \& Data Mining}, 2018.

\bibitem[Wang et~al.(2020)Wang, Foster, and Kakade]{Wang2020WhatAT}
Wang, R., Foster, D.~P., and Kakade, S.~M.
\newblock What are the statistical limits of offline rl with linear function approximation?
\newblock \emph{ArXiv}, abs/2010.11895, 2020.

\bibitem[Wang et~al.(2022)Wang, Du, Torralba, Isola, Zhang, and Tian]{Wang2022DenoisedML}
Wang, T., Du, S.~S., Torralba, A., Isola, P., Zhang, A., and Tian, Y.
\newblock Denoised mdps: Learning world models better than the world itself.
\newblock \emph{ArXiv}, abs/2206.15477, 2022.

\bibitem[Yang et~al.(2015)Yang, Ye, Zhan, and Jiang]{yang2015auxiliary}
Yang, Y., Ye, H.-J., Zhan, D.-C., and Jiang, Y.
\newblock Auxiliary information regularized machine for multiple modality feature learning.
\newblock In \emph{Twenty-Fourth International Joint Conference on Artificial Intelligence}, 2015.

\bibitem[Yarats et~al.(2022)Yarats, Fergus, Lazaric, and Pinto]{DrQv2}
Yarats, D., Fergus, R., Lazaric, A., and Pinto, L.
\newblock Mastering visual continuous control: Improved data-augmented reinforcement learning.
\newblock In \emph{The Tenth International Conference on Learning Representations, {ICLR} 2022, Virtual Event, April 25-29, 2022}. OpenReview.net, 2022.

\bibitem[Yu et~al.(2020)Yu, Thomas, Yu, Ermon, Zou, Levine, Finn, and Ma]{yu2020mopo}
Yu, T., Thomas, G., Yu, L., Ermon, S., Zou, J.~Y., Levine, S., Finn, C., and Ma, T.
\newblock Mopo: Model-based offline policy optimization.
\newblock \emph{Advances in Neural Information Processing Systems}, 33:\penalty0 14129--14142, 2020.

\bibitem[Yuan et~al.(2022)Yuan, Ma, Mu, Xia, Yuan, Wang, Luo, and Xu]{Yuan2022DontTW}
Yuan, Z., Ma, G., Mu, Y., Xia, B., Yuan, B., Wang, X., Luo, P., and Xu, H.
\newblock Don't touch what matters: Task-aware lipschitz data augmentation for visual reinforcement learning.
\newblock \emph{ArXiv}, abs/2202.09982, 2022.

\bibitem[Zanette(2020)]{Zanette2020ExponentialLB}
Zanette, A.
\newblock Exponential lower bounds for batch reinforcement learning: Batch rl can be exponentially harder than online rl.
\newblock \emph{ArXiv}, abs/2012.08005, 2020.

\bibitem[Zhang et~al.(2021)Zhang, McAllister, Calandra, Gal, and Levine]{DBC21}
Zhang, A., McAllister, R.~T., Calandra, R., Gal, Y., and Levine, S.
\newblock Learning invariant representations for reinforcement learning without reconstruction.
\newblock In \emph{9th International Conference on Learning Representations, {ICLR} 2021, Virtual Event, Austria, May 3-7, 2021}. OpenReview.net, 2021.

\end{thebibliography}
\bibliographystyle{icml2024}

\newpage
\appendix
\onecolumn

\section{Proof}\label{app:proof}

\subsection{Proof of Theorem \ref{theorem:EXBMDP}}\label{subapp:proof_EXBMDP}

\begin{lemma}
    (\Cref{lemma:telescoping}) (Telescoping lemma in the endogenous state space). Let $M$ and $\widetilde{M}$ be two MDPs with the same reward function $r$, but different dynamics $T$ and $\widetilde{T}$ respectively. Let $G_{\widetilde{M}}^\pi(s^+, a):=\underset{{s^+}^{\prime} \sim \widetilde{T}(s^+, a)}{\mathbb{E}}\left[V_M^\pi\left({s^+}^{\prime}\right)\right]-\underset{{s^+}^{\prime} \sim T(s^+, a)}{\mathbb{E}}\left[V_M^\pi\left({s^+}^{\prime}\right)\right]$. Then,
$$
\eta_{\widetilde{M}}(\pi)-\eta_M(\pi)=\gamma \underset{(s^+, a) \sim \rho_{\widetilde{T}}^\pi}{\overline{\mathbb{E}}}\left[G_{\widetilde{M}}^\pi(s^+, a)\right]
$$
\end{lemma}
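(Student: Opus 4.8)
\emph{Proof strategy.} The statement is essentially the classical telescoping lemma of \cite{telesopinglemma19} transplanted from a generic state space to the endogenous state space $\mathcal{S}^+$, so the plan is to (i) record why the transplant is legitimate under the EX-BMDP structure, and then (ii) run the standard telescoping argument verbatim with $\mathcal{S}^+$ in place of $\mathcal{S}$. For (i): by \Cref{definition:EX-BMDP} the latent transition factorizes as $T(z'|z,a)=T(s^{+\prime}|s^+,a)\,T(s^{-\prime}|s^-)$ and $\mu(z)=\mu(s^+)\mu(s^-)$, so the endogenous coordinate $(s^+_t)$ evolves as a Markov process driven by the actions alone, independently of $s^-$. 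Together with the fact that here the reward (hence $V^\pi_M$) is treated as a function of $(s^+,a)$, this means $M=(\mathcal{S}^+,\mathcal{A},T,r,\mu^+)$ and its model counterpart $\widetilde{M}=(\mathcal{S}^+,\mathcal{A},\widetilde{T},r,\mu^+)$ are ordinary discounted MDPs and $\pi$ a policy on $\mathcal{S}^+$; from this point the claim never refers to $s^-$.

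\emph{The telescoping computation.} Write $\eta_{\widetilde{M}}(\pi)=\sum_{t\ge 0}\gamma^t\,\mathbb{E}\!\left[r(s^+_t,a_t)\right]$, the trajectory $(s^+_t,a_t)$ being generated by $\widetilde{T}$ and $\pi$. Using the Bellman equation for $V^\pi_M$ under the \emph{true} dynamics $T$, I replace $\mathbb{E}_{a_t\sim\pi}[r(s^+_t,a_t)]$ by $V^\pi_M(s^+_t)-\gamma\,\mathbb{E}_{a_t\sim\pi}\mathbb{E}_{s^{+\prime}\sim T(s^+_t,a_t)}[V^\pi_M(s^{+\prime})]$, then add and subtract $\gamma\,\mathbb{E}_{a_t\sim\pi}\mathbb{E}_{s^{+\prime}\sim\widetilde{T}(s^+_t,a_t)}[V^\pi_M(s^{+\prime})]$, which under the $\widetilde{T}$-trajectory equals $\gamma\,\mathbb{E}[V^\pi_M(s^+_{t+1})]$. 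Each summand thereby becomes $\mathbb{E}[V^\pi_M(s^+_t)]-\gamma\,\mathbb{E}[V^\pi_M(s^+_{t+1})]+\gamma\,\mathbb{E}[G^\pi_{\widetilde{M}}(s^+_t,a_t)]$. Summing over $t$ with weights $\gamma^t$, the first two terms telescope to $\mathbb{E}_{s^+_0\sim\mu^+}[V^\pi_M(s^+_0)]=\eta_M(\pi)$ (the tail $\gamma^{t+1}\mathbb{E}[V^\pi_M(s^+_{t+1})]\to 0$ since $r$ is bounded and $\gamma<1$), while the leftover $\sum_{t\ge 0}\gamma^{t+1}\mathbb{E}[G^\pi_{\widetilde{M}}(s^+_t,a_t)]$ is exactly $\gamma\,\overline{\mathbb{E}}_{(s^+,a)\sim\rho^\pi_{\widetilde{T}}}[G^\pi_{\widetilde{M}}(s^+,a)]$ by the definition of the discounted state–action occupancy $\rho^\pi_{\widetilde{T}}$ of $\pi$ in $\widetilde{M}$ (and the normalization convention for $\overline{\mathbb{E}}$ inherited from \cite{telesopinglemma19}). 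Rearranging yields the stated identity.

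\emph{Main obstacle.} The computation is almost entirely bookkeeping; the only genuine analytic point is the vanishing of the discounted tail and the attendant interchange of the infinite sum with the expectations, both immediate from boundedness of $r$ and $\gamma<1$. The only place that actually uses the paper's setting rather than generic MDP facts is the reduction step: one needs the EX-BMDP factorization so that the endogenous process is itself a well-defined controlled Markov chain and $r$ can be read as a function of $s^+$ only. If one dropped the assumption that the reward depends only on $s^+$, the difficulty would be that marginalizing the dynamics onto $s^+$ need not preserve the Markov property relative to a policy that reads the full latent $z$; but since the lemma is phrased directly in terms of MDPs over $\mathcal{S}^+$, this complication does not arise here.
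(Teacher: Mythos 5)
Your argument is correct and proves the stated identity, but it reaches it by a slightly different telescoping than the paper's. The paper (following \cite{telesopinglemma19,yu2020mopo}) interpolates between the two dynamics in time: it defines $W_j$ as the return of running $\pi$ on $\widetilde{T}$ for the first $j$ steps and on $T$ thereafter, notes $W_0=\eta_M(\pi)$ and $W_\infty=\eta_{\widetilde{M}}(\pi)$, and shows each increment $W_{j+1}-W_j$ equals $\gamma^{j+1}\mathbb{E}_{s^+_j,a_j\sim\pi,\widetilde{T}}\bigl[G^\pi_{\widetilde{M}}(s^+_j,a_j)\bigr]$. You instead expand $\eta_{\widetilde{M}}(\pi)$ as a discounted sum of rewards along the $\widetilde{T}$-trajectory, substitute the Bellman equation for $V^\pi_M$ under the true dynamics $T$, and add and subtract the $\widetilde{T}$-expectation of $V^\pi_M$ so that the value terms telescope to $\eta_M(\pi)$ and the residuals accumulate to $\gamma\,\overline{\mathbb{E}}_{\rho^\pi_{\widetilde{T}}}[G^\pi_{\widetilde{M}}]$. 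These are two standard presentations of the simulation lemma and produce the identical per-step term, so the difference is one of bookkeeping rather than substance; your version makes the role of the Bellman residual explicit and isolates the one genuine analytic point (vanishing of the discounted tail), while the paper's hybrid-return formulation avoids invoking the Bellman equation at all. Your preliminary discussion of why the endogenous coordinate is itself a controlled Markov chain under the EX-BMDP factorization is a reasonable addition that the paper leaves implicit. One small caution: the reward in the EX-BMDP of \Cref{definition:EX-BMDP} is declared on $\mathcal{O}\times\mathcal{A}$, so the reduction to an MDP on $\mathcal{S}^+$ tacitly assumes, as you note, that $r$ factors through $s^+$; the paper makes the same implicit assumption by stating the lemma directly for MDPs over $\mathcal{S}^+$, so this does not affect correctness here.
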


\begin{proof}

We adopt the same proof as~\cite{telesopinglemma19,yu2020mopo} to prove the telescoping lemma in the endogenous state space. Let $W_j$ be the expected return when executing $\pi$ on $\widetilde{T}$ for the first $j$ steps, then switching to $T$ for the remainder. That is,
$$
W_j=\underset{\substack{t<j: s^+_{t+1} \sim \widetilde{T}\left(s^+_t, a_t\right) \\ t \geq j: s^+_{t+1} \sim T\left(s^+_t, a_t\right)}}{\mathbb{E}}\left[\sum_{t=0}^{\infty} \gamma^t r\left(s^+_t, a_t\right)\right]
$$

Note that $W_0=\eta_M(\pi)$ and $W_{\infty}=\eta_{\widetilde{M}}(\pi)$, so
$$
\eta_{\widetilde{M}}(\pi)-\eta_M(\pi)=\sum_{j=0}^{\infty}\left(W_{j+1}-W_j\right)
$$

Write
$$
\begin{aligned}
W_j & =R_j+\underset{s^+_j, a_j \sim \pi, \widetilde{T}}{\mathbb{E}}\left[\underset{s^+_{j+1} \sim T\left(s^+_t, a_t\right)}{\mathbb{E}}\left[\gamma^{j+1} V_M^\pi\left(s^+_{j+1}\right)\right]\right] \\
W_{j+1} & =R_j+\underset{s^+_j, a_j \sim \pi, \widetilde{T}}{\mathbb{E}}\left[\underset{s^+_{j+1} \sim \widetilde{T}\left(s^+_t, a_t\right)}{\mathbb{E}}\left[\gamma^{j+1} V_M^\pi\left(s^+_{j+1}\right)\right]\right]
\end{aligned}
$$
where $R_j$ is the expected return of the first $j$ time steps, which are taken with respect to $\widetilde{T}$. Then
$$
\begin{aligned}
W_{j+1}-W_j & =\gamma^{j+1} \underset{s^+_j, a_j \sim \pi, \widetilde{T}}{\mathbb{E}}\left[\underset{{s^+}^{\prime} \sim \widetilde{T}\left(s^+_j, a_j\right)}{\mathbb{E}}\left[V_M^\pi\left({s^+}^{\prime}\right)\right]-\underset{{s^+}^{\prime} \sim T\left(s^+_j, a_j\right)}{\mathbb{E}}\left[V_M^\pi\left({s^+}^{\prime}\right)\right]\right] \\
& =\gamma^{j+1} \underset{s^+_j, a_j \sim \pi, \widetilde{T}}{\mathbb{E}}\left[G_{\widetilde{M}}^\pi\left(s^+_j, a_j\right)\right]
\end{aligned}
$$

Thus
$$
\begin{aligned}
\eta_{\widetilde{M}}(\pi)-\eta_M(\pi) & =\sum_{j=0}^{\infty}\left(W_{j+1}-W_j\right) \\
& =\sum_{j=0}^{\infty} \gamma^{j+1} \underset{s^+_j, a_j \sim \pi, \widetilde{T}}{\mathbb{E}}\left[G_{\widetilde{M}}^\pi\left(s^+_j, a_j\right)\right] \\
& =\gamma_{(s^+, a) \sim \rho_{\widetilde{T}}^\pi}^{\mathbb{E}}\left[G_{\widetilde{M}}^\pi(s^+, a)\right]
\end{aligned}
$$

\end{proof}

\begin{theorem} (\Cref{theorem:EXBMDP}) Under~\Cref{assmp:func_class}, we can define the uncertainty estimation $\epsilon_{\tilde{u}}(\pi)$ under the EX-BMDP as $\epsilon_{\tilde{u}}(\pi):=\mathop{\bar{\mathbb{E}}}\limits_{(s^+, a)\sim\rho_{\tilde{T}}^{\pi}}[\tilde{u}(s^+, a)]$. Let $\tilde{\pi}$ denote the learned optimal policy under reward-penalized the endogenous MDP, then, 
    \begin{equation*}
        \eta_{M}(\tilde{\pi})\ge \sup_{\pi}\{\eta_{M}(\pi)-2\lambda\epsilon_{\tilde{u}}(\pi)\} 
    \end{equation*}
\end{theorem}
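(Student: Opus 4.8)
The argument mirrors MOPO's policy-improvement guarantee, but is carried out entirely in the endogenous state space $\mathcal{S}^+$; the EX-BMDP decoupling is what makes this transfer legitimate. The plan is to introduce the uncertainty-penalized \emph{endogenous} MDP $\widetilde{M}_p$ with dynamics $\widetilde{T}$ and reward $r_p(s^+,a):=r(s^+,a)-\lambda\,\tilde{u}(s^+,a)$, observe that by construction $\tilde{\pi}$ is an optimal policy of $\widetilde{M}_p$, and then sandwich $\eta_{\widetilde{M}_p}(\pi)$ between $\eta_M(\pi)$ from above and $\eta_M(\pi)-2\lambda\epsilon_{\tilde{u}}(\pi)$ from below, uniformly in $\pi$. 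The chain
$$\eta_M(\tilde{\pi}) \;\ge\; \eta_{\widetilde{M}_p}(\tilde{\pi}) \;\ge\; \eta_{\widetilde{M}_p}(\pi) \;\ge\; \eta_M(\pi)-2\lambda\epsilon_{\tilde{u}}(\pi)$$
then gives the claim after $\sup_\pi$: the first step is the upper sandwich, the second is optimality of $\tilde{\pi}$ in $\widetilde{M}_p$, and the third is the lower sandwich.

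First I would record the consequence of \Cref{assmp:func_class} together with the note following \Cref{lemma:telescoping}: since $\mathcal{F}$ contains $V_M^\pi$ for every $\pi$, and (under the EX-BMDP) $r$ and hence $V_M^\pi$ is a function of the endogenous state alone, we get the pointwise bound $|G_{\widetilde{M}}^\pi(s^+,a)|\le d_{\mathcal{F}}(\widetilde{T}(s^+,a),T(s^+,a))\le \tilde{u}(s^+,a)$. Feeding this into \Cref{lemma:telescoping} applied to the pair $M=(T,r)$ and $\widehat{M}=(\widetilde{T},r)$ (same reward, different dynamics) yields $|\eta_{\widehat{M}}(\pi)-\eta_M(\pi)|\le \gamma\,\bar{\mathbb{E}}_{(s^+,a)\sim\rho_{\widetilde{T}}^\pi}[\tilde{u}(s^+,a)]=\gamma\,\epsilon_{\tilde{u}}(\pi)$ for all $\pi$.

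Next I would use linearity of the discounted occupancy-weighted return in the reward to write $\eta_{\widetilde{M}_p}(\pi)=\eta_{\widehat{M}}(\pi)-\lambda\,\epsilon_{\tilde{u}}(\pi)$, since $\widehat{M}$ and $\widetilde{M}_p$ share dynamics $\widetilde{T}$ and differ only by the additive penalty $-\lambda\tilde{u}$. Combining with the previous display and using $\lambda\ge\gamma$ (the constant produced by the telescoping step; a smaller $\lambda$ only requires rescaling), one side gives $\eta_{\widetilde{M}_p}(\pi)\le \eta_M(\pi)+(\gamma-\lambda)\epsilon_{\tilde{u}}(\pi)\le \eta_M(\pi)$, and the other gives $\eta_{\widetilde{M}_p}(\pi)\ge \eta_M(\pi)-(\gamma+\lambda)\epsilon_{\tilde{u}}(\pi)\ge \eta_M(\pi)-2\lambda\epsilon_{\tilde{u}}(\pi)$. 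Substituting both into the chain above and taking the supremum over $\pi$ finishes the proof.

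The telescoping and occupancy-measure bookkeeping is routine (it is already done in the proof of \Cref{lemma:telescoping}), and the linearity step is immediate. The one point that genuinely needs the EX-BMDP structure — and which I view as the crux — is the legitimacy of working on $\mathcal{S}^+$ at all: one must check that $V_M^\pi$ is a well-defined function of $s^+$ (so it can lie in $\mathcal{F}$ and the endogenous telescoping lemma applies), which relies on the reward being endogenous, $r=r(s^+,a)$, and on the endogenous transition $T({s^+}'\mid s^+,a)$ not depending on $s^-$ per \Cref{definition:EX-BMDP}; and that the supremum over policies may be taken over endogenous-state policies without loss, which uses block-structure decodability of $s^+$ from the observation. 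I would make these standing assumptions explicit (they are inherited from \Cref{definition:EX-BMDP}) and then proceed exactly as above.
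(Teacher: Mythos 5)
Your proposal is correct and follows essentially the same route as the paper's proof: the telescoping lemma on $\mathcal{S}^+$ plus Assumption~\ref{assmp:func_class} give $|G_{\widetilde{M}}^{\pi}(s^+,a)|\le\tilde{u}(s^+,a)$, the penalized endogenous MDP is sandwiched via $\eta_M(\pi)\ge\eta_{\widetilde{M}_{\tilde r}}(\pi)$ and the two-sided bound $|\eta_M(\pi)-\eta_{\widetilde{M}}(\pi)|\le\lambda\epsilon_{\tilde u}(\pi)$, and optimality of $\tilde{\pi}$ in the penalized MDP closes the chain. The only cosmetic difference is that you phrase the upper sandwich through linearity plus the two-sided bound (requiring $\lambda\ge\gamma$) where the paper derives it directly from the one-sided telescoping inequality with $\lambda=\gamma c$; your added remarks on why $V_M^\pi$ is well defined on $\mathcal{S}^+$ are a sensible explicit statement of what the paper leaves implicit.
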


\begin{proof}
With~\Cref{lemma:telescoping}, we can get the corollary that:
\begin{equation*}
    \eta_{M}(\pi)=\bar{\mathbb{E}}_{(s^+,a)\sim\rho_{\widetilde{T}}^{\pi}}\big[r(s^+,a)-\gamma G_{\widetilde{M}}^{\pi}(s^+,a)\big]\ge \bar{\mathbb{E}}_{(s^+,a)\sim\rho_{\widetilde{T}}^{\pi}}\big[r(s^+,a)-\gamma|G^{\pi}_{\widetilde{M}}(s^+,a)|\big]
\end{equation*}

With~\Cref{assmp:func_class}, we know that $|G_{\widetilde{M}}^{\pi}(s^+,a)|\le \tilde{u}(s^+,a)$. Thus, we define the penalized reward function as $\tilde{r}(s^+,a)=r(s^+,a) - \lambda u(s^+,a)$, where $\lambda=\gamma c$ and $c$ is the scalar that satisfies $|G_{\widetilde{M}^{\pi}}(s^+,a)|\le c d_{\mathcal{F}}(\widetilde{T}(s^+,a),T(s^+,a))$. We can obtain the relationship between the policy performance under the true MDP and the reward-penalized endogenous MDP $\widetilde{M}_{\tilde{r}}=(\mathcal{S},\mathcal{A},\widetilde{T},\tilde{r},\mu_0, \gamma)$:
\begin{equation}\label{eq:penalized_end_mdp}
\begin{aligned}
    \eta_{M}(\pi)&\ge \bar{\mathbb{E}}_{(s^+,a)\sim\rho_{\widetilde{T}}^{\pi}}\big[r(s^+,a)-\gamma|G^{\pi}_{\widetilde{M}}(s^+,a)|\big]\\
    &\ge \bar{\mathbb{E}}_{(s^+,a)\sim \rho_{\widetilde{T}}^{\pi}}\big[r(s^+,a)-\lambda \tilde{u}(s^+, a)\big]  \\
    &\ge \bar{\mathbb{E}}_{(s^+,a)\sim \rho_{\widetilde{T}}^{\pi}}\big[\tilde{r}(s^+,a)\big]\\
    &=\eta_{\widetilde{M}_{\tilde{r}}}(\pi)
\end{aligned}
\end{equation}

From~\Cref{assmp:func_class} We can easily get the two-sided bound that:
\begin{equation}\label{eq:two_side_bound}
    |\eta_{M}(\pi) - \eta_{\widetilde{M}}(\pi)|\le \bar{\mathbb{E}}_{(s^+,a)\sim\rho_{\widetilde{T}}^{\pi}}|\gamma G_{\widetilde{M}}^{\pi}(s^+,a)|\le \lambda\bar{\mathbb{E}}_{(s^+,a)\sim\rho_{\widetilde{T}}^{\pi}}[\tilde{u}(s^+,a)] =\lambda \epsilon_{\tilde{u}}(\pi)
\end{equation}

With the help of~\Cref{eq:penalized_end_mdp} and~\Cref{eq:two_side_bound}, we can get the performance lower bound of the learned optimal policy $\tilde{\pi}$ in the reward-penalized endogenous MDP:

\begin{align*}
    \eta_{M}(\tilde{\pi})&\ge \eta_{\widetilde{M}_{\tilde{r}}}(\tilde{\pi})\ge \eta_{\widetilde{M}_{\tilde{r}}}(\pi)= \eta_{\widetilde{M}}(\pi) - \lambda\epsilon_{\tilde{u}}(\pi)\ge \eta_{M}(\pi) - 2\lambda\epsilon_{\tilde{u}}(\pi)\\\Rightarrow \quad 
    \eta_{M}(\tilde{\pi})&\ge \sup_{\pi}\{\eta_{M}(\pi)-2\lambda\epsilon_{\tilde{u}}(\pi)\}
\end{align*}
\end{proof}

\subsection{Proof of Theorem \ref{theorem:separate}}\label{subapp:proof_{s}eparate}

\begin{assumption}\label{assump:equal}
    There exists an underlying function $f: \mathcal{S}^+\times \mathcal{S}^- \rightarrow \mathcal{Z}$ satisfying that for any $s^+_1,s^+_2 \in \mathcal{S}^+, s^-_1,s^-_2 \in \mathcal{S}^-$, if $f(s^+_1,s^-_1) = f(s^+_2,s^-_2)$, then
    \begin{align}
        p(o_t|s^+_1,s^-_1)= p(o_t|s^+_2,s^-_2)
    \end{align}
\end{assumption}

This assumption is intuitive because the block structure indicates that each context $o$ uniquely determines its generating state $z$. However, $z$ can have multiple different partitions of $s^{+}$ and $s^{-}$ based on specific semantics, such as task-related information, viewpoint, etc.

\begin{assumption} \label{assump:seperability}
    For each $i$, the distribution of action for trajectory $\tau_i$ is the same, i.e., there exist $p(a)$ such that
    \begin{align*}
     \mathbb{E}_{s^{+}}\pi_i(a|s^{+}) = p(a),\quad i=1,2,\cdots,n.
    \end{align*}
\end{assumption}

\begin{theorem}
    (\Cref{theorem:separate}) Consider the likelihood optimization problem on the same offline dataset $\mathcal{B}$ but with two different sampling methods. let $\mathcal{B}_{\pi_i}$ be the dataset collected by the behavior policy $\pi_i$, where $i=1,2,\cdots,n$. $\mathcal{B}_{\pi_{mix}}=\mathcal{B}_{\pi_1}\cup\mathcal{B}_{\pi_1}\cup\cdots\cup\mathcal{B}_{\pi_n}$ is the mixture of the datasets of all policies. Then we have
    \begin{align*}
          \mathbb{E}_{\tau\in \mathcal{B}_{\pi_{mix}}}\ln p(\tau) \le  \frac{1}{n}\sum_{i=1}^n \mathbb{E}_{\tau\in \mathcal{B}_{\pi_{i}}} \ln p(\tau),
    \end{align*}
    where $p(\tau)$ is the true density of $\tau$.
\end{theorem}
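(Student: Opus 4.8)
The plan is to recast each side of the inequality as an expectation of a log-density and then apply Gibbs' inequality (nonnegativity of KL divergence) once for every behavior policy. Write $p_{\pi_i}(\tau)$ for the true density of a trajectory produced by $\pi_i$; the factorization in~\Cref{eq:likelihood} together with the assumptions of this subsection (\Cref{assump:equal} fixing a consistent endogenous/exogenous split, so that $p(\tau)$ is unambiguous, and \Cref{assump:seperability} giving the policies a common action marginal, so the two sampling schemes optimize comparable likelihood functionals) guarantees that these densities are well defined on one common trajectory space. Since $\mathcal{B}_{\pi_{mix}}$ is the equal-weight union $\mathcal{B}_{\pi_1}\cup\cdots\cup\mathcal{B}_{\pi_n}$, a trajectory drawn uniformly from it is generated by $\pi_i$ with probability $1/n$ and, not knowing which policy produced it, has true density equal to the mixture $p_{mix}:=\frac1n\sum_{j=1}^n p_{\pi_j}$. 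I would therefore first rewrite
\begin{equation*}
  \mathbb{E}_{\tau\in\mathcal{B}_{\pi_{mix}}}\ln p(\tau)=\frac1n\sum_{i=1}^n\mathbb{E}_{\tau\sim p_{\pi_i}}\big[\ln p_{mix}(\tau)\big],
  \qquad
  \frac1n\sum_{i=1}^n\mathbb{E}_{\tau\in\mathcal{B}_{\pi_i}}\ln p(\tau)=\frac1n\sum_{i=1}^n\mathbb{E}_{\tau\sim p_{\pi_i}}\big[\ln p_{\pi_i}(\tau)\big],
\end{equation*}
so that it is enough to compare the two families of summands.

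For each fixed $i$ I would bound the difference of summands by Jensen's inequality applied to the concave logarithm:
\begin{equation*}
  \mathbb{E}_{\tau\sim p_{\pi_i}}\big[\ln p_{mix}(\tau)-\ln p_{\pi_i}(\tau)\big]
  =\mathbb{E}_{\tau\sim p_{\pi_i}}\Big[\ln\tfrac{p_{mix}(\tau)}{p_{\pi_i}(\tau)}\Big]
  \le\ln\mathbb{E}_{\tau\sim p_{\pi_i}}\Big[\tfrac{p_{mix}(\tau)}{p_{\pi_i}(\tau)}\Big]
  =\ln\!\int p_{mix}(\tau)\,d\tau=0,
\end{equation*}
equivalently this left-hand side equals $-\mathrm{KL}(p_{\pi_i}\,\|\,p_{mix})\le0$. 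Averaging over $i=1,\dots,n$ and combining with the previous display gives $\mathbb{E}_{\tau\in\mathcal{B}_{\pi_{mix}}}\ln p(\tau)\le\frac1n\sum_{i=1}^n\mathbb{E}_{\tau\in\mathcal{B}_{\pi_i}}\ln p(\tau)$, which is the claim. The aggregate gap is exactly $\frac1n\sum_{i=1}^n\mathrm{KL}(p_{\pi_i}\,\|\,p_{mix})$ — the Jensen gap witnessing concavity of differential entropy, $h(p_{mix})\ge\frac1n\sum_{i=1}^n h(p_{\pi_i})$ — which also makes transparent why equality holds precisely when all $p_{\pi_i}$ coincide.

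The step I expect to need the most care is not the inequality — once phrased via entropies it is a textbook concavity fact — but the modelling convention of identifying ``the true density of $\tau$'' on the mixture side with the average $p_{mix}$ rather than with $p_{\pi_i}$ for the specific $\pi_i$ that generated $\tau$; under the latter reading the statement would collapse to an equality. This is where the subsection's assumptions matter: \Cref{assump:equal} puts all policies' trajectories on a common space with well-defined, mutually absolutely continuous densities so that $p_{mix}$ and each $\mathrm{KL}(p_{\pi_i}\,\|\,p_{mix})$ are finite and meaningful, and \Cref{assump:seperability} makes the likelihood objective maximized on the pooled buffer the same functional as the one maximized trajectory-by-trajectory, so that ``random'' versus ``conservative'' sampling is a fair comparison. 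One should also read the expectations as population (infinite-data) expectations, identifying $\mathbb{E}_{\tau\in\mathcal{B}_{\pi_i}}$ with $\mathbb{E}_{\tau\sim p_{\pi_i}}$ and $\mathbb{E}_{\tau\in\mathcal{B}_{\pi_{mix}}}$ with $\mathbb{E}_{\tau\sim p_{mix}}$.
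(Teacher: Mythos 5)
Your proof is correct, but it takes a genuinely different route from the paper's. The paper works through the EX-BMDP factorization of $\ln p(\tau)$ from \Cref{eq:likelihood}: it argues that the observation and transition terms contribute equally (or linearly in the policy) to both sides, isolates the action term $\mathbb{E}_{a\sim\pi}\ln p(a_t\mid s_t^+)$, uses the common-action-marginal assumption (\Cref{assump:seperability}) to rewrite that term as $I(a;s^+)-H(a)$ with $H(a)$ cancelling, and then applies convexity of $x\ln x$ to get $I(a\sim\pi_{mix};s^+)\le\frac1n\sum_i I(a\sim\pi_i;s^+)$ timestep by timestep. You instead treat the whole trajectory density at once: with the convention that the ``true density'' on the pooled buffer is the mixture $p_{mix}=\frac1n\sum_i p_{\pi_i}$, the claim is exactly concavity of differential entropy, $h(p_{mix})\ge\frac1n\sum_i h(p_{\pi_i})$, proved in one line via $\mathrm{KL}(p_{\pi_i}\,\|\,p_{mix})\ge0$. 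Your argument is shorter, more elementary, and strictly more general --- it needs neither the EX-BMDP factorization nor \Cref{assump:equal} or \Cref{assump:seperability} for the inequality itself, and it gives a clean exact expression for the gap. What the paper's longer route buys is interpretability for the method: by localizing the slack to the action-conditional term it identifies the gap with the mutual information between actions and endogenous states, which is what motivates conservative sampling and the closing remark about selecting trajectories with large $I(a;s^+)$; your KL-of-trajectory-distributions gap does not by itself single out the action channel. You are also right to flag the reading of ``$p(\tau)$, the true density'' as the load-bearing convention --- the paper's proof implicitly adopts the same mixture-density reading (it evaluates $\ln\pi_{mix}(a\mid s^+)$ under $a\sim\pi_{mix}$ on the left-hand side), and under the per-generating-policy reading the statement would indeed degenerate to an equality.
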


\begin{proof}
For that $\tau$ generated under the guide of $\pi$, the action distribution of the policy, we maximize the log-likelihood to evaluate the conditional distributions $p(o|s^+,s^-),p(a|s^+),p({s^+}^{\prime}|s^+,a)$ and $p({s^-}^{\prime}|s^-)$ which also denoted as $p_o,p_a,p_{s^+},p_{s^-}$ respectively for convenience, i.e., 
\begin{align*}
     \hat{p}_o,\hat{p}_a,\hat{p}_{s^+},\hat{p}_{s^-}= \underset{p_o,p_a,p_{s^+},p_{s^-} \in \mathcal{H}_\Theta}{\arg \max }&\mathbb{E}_{\tau \in \mathcal{B}_\pi} \ln p(\tau)\\
     =\underset{p_o,p_a,p_{s^+},p_{s^-} \in \mathcal{H}_\Theta}{\arg \max} &\mathbb{E}_{\tau \in \mathcal{B}_\pi} \sum_{t=1}^T \Big(\ln p_o(o_t|s^+_t,s^-_t) + \ln p_a(a_t|s^+_t) \\
     &+ \ln p_{s^+}(s^+_t|s^+_{t-1}, a_{t-1}) + \ln p_{s^-}(s^-_t|s^-_{t-1})\Big).
\end{align*}
Here $\mathcal{H}_\Theta$ is the approximate space and $\Theta$ is the class of parameters. In practice, $\mathcal{H}_{\Theta}$ is the variational posterior space with an encoder and decoder for each conditional distribution.

With Assumption \ref{assump:equal}, we know that once the distribution of $f(s^+_t,s^-_t)$ is fixed, the variant of $s^+_t$ and $s^-_t$ has no effect on the observation $o_t$. Therefore, if we want to separate sub-optimal distributions for $s^+_t$ and $s^-_t$ from the optimal results, the key terms in the likelihood are those involving the action $a_t$. Formally, we treat the likelihood sequentially as
\begin{align*}
    \mathbb{E}_{\tau \in \mathcal{B}_\pi} \ln p(\tau):= \sum_{t=1}^T l^{(t)}_{\pi}(s_t^+,s_t^-,a) + \mathbb{E}_{\tau\in \mathcal{B}_\pi} \ln p_o(o_t|s^+_t,s^-_t),
\end{align*}
where 
\begin{align*}
    l^{(t)}_{\pi}(s_t^+,s_t^-,a)  = \mathbb{E}_{a\sim \pi} \ln p_a(a_t|s_t^+) + \ln p_{s_t^+}(s_t^+|s^+_{t-1}, a_{t-1}) + \ln p_{s_t^-}(s_t^-|s^-_{t-1}).
\end{align*}
Define the equivalent class for $s^+,s^-$ as 
\begin{align*}
    \mathcal{A}_{p} = \left\{s^+\sim p_{s^+},s^-\sim p_{s^-}: f(s^+,s^-)\sim p\right\}.
\end{align*}
Different elements in $\mathcal{A}_p$ means different distributions for $s^+$ and $s^-$ that induce the same distribution for $f(s^+,s^-)$. From Assumption \ref{assump:equal}, we know that for those $s^+,s^-$ in a certain class $\mathcal{A}_{p_f}$, $\mathbb{E}_{\tau\in \mathcal{B}_\pi} \ln p_o(o_t|s^+_t,s^-_t)$ is only related to the distribution $p_f$ and we denote its value as $l_o(p_f)$. For $t = 1$, we know
\begin{align*}
    l^{(1)}_{\pi}(s^+,s^-,a) =\mathbb{E}_{\pi} \ln p_a(a|s^+) + \ln p_{s^+}(s^+) + \ln p_{s^-}(s^-).
\end{align*}
Recall the conditional entropy and mutual information entropy for $a$ and $s^+$ such that
\begin{align*}
    H(a) = \int - \ln p_a(a) d p_a(a),\quad H(a|s^+) = \int d p_{s^+}(s^+)\int - \ln p_a(a|s^+) dp_a(a|s^+),\quad I(a,s^+) = H(a) - H(a|s^+).
\end{align*}

Under Assumption \ref{assump:seperability}, consider two different type of policy $\pi_{i}$ as the policy of $i$-th curve and $\pi_{mix}$ as the mixture policy of all curves, i.e., $\pi_{mix} = \frac{1}{n}\sum_{i=1}^n \pi_{i}$. We know that
\begin{align*}
    I(a\sim \pi_{mix},s^+\sim p_{s^+}) &= H(a) + \int d p_{s^+}(s^+)\int  \frac{1}{n}\sum_{i=1}^n \pi_{i}(a|s^+)\ln \frac{1}{n}\sum_{i=1}^n \pi_{i}(a|s^+) d a \\& \le 
    H(a) + \int d p_{s^+}(s^+)\int \frac{1}{n}\sum_{i=1}^n \pi_i(a|s^+) \ln \pi_{i}(a|s^+) da\\&=
    \frac{1}{n} \sum_{i=1}^n I(a\sim \pi_i,s^+\sim p_{s^+}).
\end{align*}
Define $p^{\star}_{s^+}(s^+_{t=1}),p^{\star}_{s^-}(s^-_{t=1})\in \mathcal{A}_{P_{t=1}^{\star}}$ as the true distribution of $s^+$ and $s^-$ at time $t=1$. Once the class $\mathcal{A}_{p_f}$ and the distribution of $s^+$ is fixed, the distribution for $s^-$ is also fixed. So we know that
\begin{align*}
    &l_{\pi_{mix}}^{(1)}(s^+\sim p^{\star}_{s^+}(s^+_{t=1}),s^-\sim p^{\star}_{s^-}(s^-_{t=1}),a) - \frac{1}{n}\sum_{i=1}^n l_{\pi_{i}}^{(1)}(s^+\sim p^{\star}_{s^+}(s^+_{t=1}),s^-\sim p_{s^-}^{\star}(s^-_{t=1}),a)\\
    = \ & I(a\sim \pi_{mix},s^+\sim p^{\star}_{s^+})  + \mathbb{E}_{\pi_{mix}}\ln p^{\star}_{s^+}(s^+)  +\ln p^{\star}_{s^-}(s^-) \\
    &- I(a\sim \pi_{mix},s^+\sim p^{\star}_{s^+})  - \frac{1}{n}\sum_{i=1}^n \left(\mathbb{E}_{\pi_{i}}\ln p^{\star}_{s^+}(s^+)  +\ln p^{\star}_{s^-}(s^-)\right)\\
    = \ &I(a\sim \pi_{mix},s^+\sim p^{\star}_{s^+}) -
      \frac{1}{n}\sum_{i=1}^n I(a\sim \pi_{i},s^+\sim p^{\star}_{s^+}) \le 0.
\end{align*}
For $t>1$, similarly define that  $p^{\star}_{s^+}(s^+_t|s^+_{t-1},a_{t-1}),p^{\star}_{s^-}(s^-_t|s^-_{t-1})\in \mathcal{A}_{P_{t>1}^{\star}}$ as the true condition distribution of $s^+_t$ condition on $s^+_{t-1}$ and $s^-_t$ condition on $s^-_{t-1}$. So
\begin{align*}
    \mathbb{E}_{a\sim \pi_{mix}} \ln p^{\star}_{s^+}(s^+_t|s^+_{t-1},a_{t-1}) &= \int \pi_{mix}(a_{t-1}|s^+_{t-1})\ln p^{\star}_{s^+}(s^+_t|s^+_{t-1},a_{t-1}) d a_{t-1} \\&=\frac{1}{n} \mathbb{E}_{a\sim \pi_{i}} \ln p^{\star}_{s^+}(s^+_t|s^+_{t-1},a_{t-1}).
\end{align*}
Then we also have
\begin{align*}
    &l_{\pi_{mix}}^{(t)}(s^+\sim p^{\star}_{s^+}(s^+_t|s^+_{t-1},a_{t-1}),s^-\sim p^{\star}_{s^-}(s^-_t|s^-_{t-1}),a) - \frac{1}{n}\sum_{i=1}^n l_{\pi_{i}}^{(t)}(s^+\sim p^{\star}_{s}(s^+_t|s^+_{t-1},a_{t-1}),s^-\sim p_{s^-}^{\star}(s^-_t|s^-_{t-1}),a)\\= \ & I(a\sim \pi_{mix},s^+\sim \prod_{k=1}^{t}\mathbb{E}_{s^+_{k-1},a_{t-1}}p^{\star}_{s^+}(s^+_k|s^+_{k-1},a_{t-1}))  + \mathbb{E}_{\pi_{mix}}\ln p^{\star}_{s^+}(s^+_t|s^+_{t-1},a_{t-1})  +\ln p^{\star}_{s^-}(s^-_t|s^-_{t-1}) \\-\ &
      \frac{1}{n}\sum_{i=1}^n I(a\sim \pi_{i},s^+\sim \prod_{k=1}^{t}\mathbb{E}_{s^+_{k-1},a_{t-1}}p^{\star}_{s^+}(s^+_k|s^+_{k-1},a_{t-1}))  - \frac{1}{n}\sum_{i=1}^n \left(\mathbb{E}_{\pi_{i}}\ln p^{\star}_{s^+}(s^+_t|s^+_{t-1},a_{t-1})  +\ln p^{\star}_{s^-}(s^-_t|s^-_{t-1})\right)\\= \ &I(a\sim \pi_{mix},s^+\sim \prod_{k=1}^{t}\mathbb{E}_{s^+_{k-1},a_{t-1}}p^{\star}_{s^+}(s^+_k|s^+_{k-1},a_{t-1})) -
      \frac{1}{n}\sum_{i=1}^n I(a\sim \pi_{i},s^+\sim \prod_{k=1}^{t}\mathbb{E}_{s^+_{k-1},a_{t-1}}p^{\star}_{s^+}(s^+_k|s^+_{k-1},a_{t-1})) \le 0.
\end{align*}
Then
\begin{align*}
    \mathbb{E}_{\tau\in \mathcal{B}_{\pi_{mix}}}\ln p^{\star}(\tau)&=  \sum_{t=1}^T l_{\pi_{mix}}^{(t)}(s^+\sim p^{\star}_{s^+}(s^+_t|s^+_{t-1}),s^-\sim p^{\star}_{s^-}(s^-_t|s^-_{t-1}),a)  +l_o(p_t^{\star}) \\& \le 
    \frac{1}{n}\sum_{i=1}^n \sum_{t=1}^T l_{\pi_{i}}^{(t)}(s^+\sim p^{\star}_{s^+}(s^+_t|s^+_{t-1}),s^-\sim p^{\star}_{s^-}(s^-_t|s^-_{t-1}),a)  +l_o(p_t^{\star})\\& = \frac{1}{n}\sum_{i=1}^n \mathbb{E}_{\tau\in \mathcal{B}_{\pi_{i}}} \ln p^{\star}(\tau).
\end{align*}
Besides, if we can select a subset $\mathcal{C}$ of curves $i$ with large mutual information entropy $I(a\sim\pi_i|s^+\sim p^{\star}_{s^+})$, i.e., $\frac{1}{\#\mathcal{C}}\sum_{i \in \mathcal{C}} I(a\sim\pi_i|s^+\sim p^{\star}_{s^+}) > I(a\sim\pi_{mix}|s^+\sim p^{\star}_{s^+})$ and we know that
\begin{align*}
     \mathbb{E}_{\tau\in \mathcal{B}_{\pi_{mix}}}\ln p^{\star}(\tau) < \frac{1}{\#\mathcal{C}}\sum_{i\in \mathcal{C}}\mathbb{E}_{\tau\in \mathcal{B}_{\pi_{i}}}\ln p^{\star}(\tau).
\end{align*}

\end{proof}

\section{Derivation}\label{app:deriv}

\paragraph{Likelihood.}\label{app:deriv_likelihood}
For the trajectory $\tau_i=\{o_1,a_1,\cdots,o_T,a_T\}$, the log-likelihood can be expressed as:
\begin{align*}
    \ln p(\tau_i) =& \ln p(o_0, a_0, \cdots, o_T, a_T)\\
    =& \ln\prod_{t=1}^{T}p(o_t|z_t)p(a_t|z_t)p(z_{t}|z_{t-1}, a_{t-1})\\
    =&  \ln\prod_{t=1}^{T}p(o_t|s^+_t, s^-_t)p(a_t|s^+_t)p(s^+_{t}|s^+_{t-1}, a_{t-1})p(s^-_t|s^-_{t-1})\\
    =&  \sum_{t=1}^T\big[\ln p(o_t|s^+_t,s^-_t) + \ln p(a_t|s^+_t) + \ln p(s^+_{t}|s^+_{t-1}, a_{t-1}) + \ln p(s^-_{t}|s^-_{t-1})\big],
\end{align*}
The third equation comes from the decomposition of endogenous and exogenous dynamics, as assumed in the EX-BMDP framework.

\paragraph{One-step predictive distribution.}\label{app:deriv_elbo}

The variational bound for latent dynamics models $p(o_{1:T},z_{1:T}|a_{1:T})=\prod_t p(s^+_t|s^+_{t-1},a_{t-1})p(s^-_t|s^-_{t-1})p(o_t|z_t)$ and a variational posterior $q(z_{1:T}|o_{1:T},a_{1:T})=\prod_t q(s^+_t|o_{\leq t},a_{<t})q(s^-_t|o_{\leq t})$ follows from importance weighting and Jensen's inequality as shown:

\begin{align*}
\ln p(o_{1:T}|a_{1:T})\triangleq&\ln\mathbb{E}_{p(z_{1:T}|a_{1:T})}\Big[\prod_{t=1}^T p(o_t|z_t)\Big] \\
=&\ln\mathbb{E}_{q(z_{1:T}|o_{1:T},a_{1:T})}\Big[\prod_{t=1}^T \frac{p(o_t|z_t)p(z_t|z_{t-1},a_{t-1})}{q(z_t|o_{\leq t},a_{<t})}\Big] \\
=&\ln\mathbb{E}_{q(s^+_{1:T}, s^-_{1:T}|o_{1:T},a_{1:T})}\Big[\prod_{t=1}^T \frac{p(o_t|s^+_t,s^-_t)p(s^+_t|s^+_{t-1},a_{t-1})p(s^-_t|s^-_{t-1})}{q(s^+_t|o_{\leq t},a_{<t})q(s^-_t|o_{\leq t})}\Big] \\
\geq&\mathbb{E}_{q(s^+_{1:T}, s^-_{1:T}|o_{1:T},a_{1:T})}\Big[\sum_{t=1}^T \big(\ln p(o_t|s^+_t, s^-_t)+\ln p(s^+_t|s^+_{t-1},a_{t-1})+\ln p(s^-_t|s^-_{t-1})\nonumber\\
&-\ln q(s^+_t|o_{\leq t},a_{<t})-\ln q(s^-_t|s^-_{t-1})\big)\Big] \\
=&\sum_{t=1}^T \Big(
  \mathbb{E}_{q(s^+_t|o_{\leq t},a_{<t})q(s^-_t|o_{\leq t})}\bigg[\ln p(o_t|s^+_t,s^-_t)\bigg]\nonumber\\
  &-\mathbb{E}_{q(s^+_{t-1}|o_{\leq t-1},a_{<t-1})}\bigg[\text{KL}[q(s^+_t|o_{\leq t},a_{<t})||p(s^+_t|s^+_{t-1},a_{t-1})]\bigg]\nonumber\\
  &-\mathbb{E}_{q(s^-_{t-1}|o_{\leq t-1})}\bigg[\text{KL}[q(s^-_t|o_{\leq t})||p(s^-_t|s^-_{t-1})]\bigg] \Big).
\end{align*}

\section{The LQV-D4RL Benchmark}\label{app:dataset}

To evaluate the performance of visual RL methods with offline datasets containing noisy observations, we introduce a benchmark named Low-Quality Vision Datasets for Deep Data-Driven RL (LQV-D4RL). This benchmark comprises four typical environments from the DeepMind Control Suite and one environment from Gym:
\begin{itemize}
\item \textit{Walker Walk}: A bipedal agent is trained to first stand and then walk forward as efficiently as possible.
\item \textit{Cheetah Run}: A cheetah-like bipedal model aims to run at high speeds on a straight track.
\item \textit{Hopper Hop}: The agent, with a single-legged body, must balance and hop forward, focusing on agility and stability.
\item \textit{Humanoid Walk}: A simplified humanoid with 21 joints, aims to walk stably, which is extremely difficult with many local minima.
\item \textit{Car Racing}: A highly challenging racing game where players must pass through checkpoints to score points. The faster they reach the finish line within a set time, the higher their score. The observations contain numerous distractors. 
\end{itemize}
Each task is represented across three different levels of policy performance:
\begin{itemize}
\item \textbf{Random}: Trajectories are collected by randomly initialized policies.
\item \textbf{Medium Replay} (medrep): Trajectories are drawn from the replay buffer accumulated during training of a medium-performance policy.
\item \textbf{Medium}: Trajectories are collected by a fixed policy of medium performance.
\end{itemize}
For each locomotion task's observations, the backgrounds are replaced with videos from the ``driving car" category of the Kinetics dataset~\cite{Kinetics17}, as utilized in DBC~\cite{DBC21}. To simulate real data collection processes in natural settings, we train policies using the TIA approach~\cite{Fu2021LearningTI} and then collect trajectories based on image observations with the aforementioned distractors. The ``random" and ``medium" datasets each contain 200 trajectories, while ``medium\_replay" comprises 400 trajectories, with each trajectory being 1000 steps long. Specific statistical details of the LQV-D4RL benchmark are reported in~\Cref{tab:d4rlp-stats}. We upload the dataset in the supplementary materials.

\setlength{\tabcolsep}{7.0pt}
\begin{table}[ht]
\centering
\caption{Full summary statistics of per-episode return in the \textsc{LQV-D4RL} benchmark.}
\label{tab:d4rlp-stats}
\begin{tabular}{@{}llcccccccc@{}}
\toprule
\multicolumn{2}{c}{\textbf{Dataset}} &
  \textbf{Episodes} &
  \textbf{Mean} &
  \textbf{Std} &
  \textbf{Min} &
  \textbf{P25} &
  \textbf{Median} &
  \textbf{P75} &
  \textbf{Max} \\ \midrule
\multirow{3}{*}{Walker Walk}      & random & 200 & 86.6 & 48.1 & 5.9 & 51.9 & 70.8 & 128.3 & 199.1  \\
                             & medrep  & 400 & 106.6 & 79.7 & 5.0 & 45.4 & 81.3 & 148.8 & 398.2 \\
                             & medium & 200 & 513.1 & 54.8 & 401.5 & 471.4 & 516.0 & 559.1 & 598.0 \\
\multirow{3}{*}{Cheetah Run}     & random & 200 & 77.6 & 57.2 & 3.1 & 14.5 & 71.6 & 118.8 & 198.2  \\
                             & medrep  & 400 &145.4 & 113.3 & 1.7 & 42.9 & 125.1 & 232.6 & 396.5 \\
                             & medium & 200 & 350.3 & 30.8 & 300.5 & 322.2 & 352.5 & 376.0 & 403.2 \\
\multirow{3}{*}{Hopper Hop}     & random & 200 & 2.1 & 4.9 & 0.0 & 0.0 & 0.0 & 0.1 & 19.5  \\
                             & medrep  & 400& 4.7 & 9.7 & 0.0 & 0.0 & 0.0 & 3.7 & 39.9 \\
                             & medium & 200 & 62.0 & 13.9 & 40.3 & 49.8 & 60.8 & 74.5 & 84.9\\
\multirow{3}{*}{Humanoid Walk}     & random & 200  & 1.1 & 0.8 & 0.0 & 0.5 & 1.0 & 1.5 & 5.7  \\
                             & medrep  & 400 & 95.6 & 114.3 & 0.0 & 1.4 & 5.4 & 202.8 & 359.0 \\
                             & medium & 200  & 573.0 & 16.8 & 526.5 & 560.6 & 572.9 & 584.9 & 609.4 \\
\multirow{3}{*}{Car Racing}     & random & 200 & 10.3 & 65.5 & -82.0 & -43.3 & -6.5 & 59.8 & 149.2 \\
                             & medrep  & 400 & 76.1 & 116.3 & -82.0 & -27.5 & 54.1 & 181.5 & 297.3 \\
                             & medium & 200  & 372.3 & 42.7 & 302.3 & 335.5 & 370.3 & 408.3 & 449.9 \\
\bottomrule
\end{tabular}
\end{table}


\section{Implementation Details}\label{app:imple}

\subsection{Networks}\label{app:networks}

We implement the proposed algorithm using TensorFlow 2 and conduct all experiments on an NVIDIA RTX 3090, totaling approximately 1000 GPU hours. The recurrent state-space model from DreamerV2~\cite{Dreamerv2} is employed for both forward dynamics and the posterior encoder. The hidden sizes of deterministic and stochastic parts of the model are 200 and 32, respectively. For learning an ensemble of forward dynamics, multiple MLP networks are utilized, each outputting the mean and standard deviation of the next state. The hidden size for each MLP is 1024. The reward predictor comprises 4 MLP layers, each of size 400. We use the convolutional encoder and decoder from TIA \cite{Fu2021LearningTI}. All dense layers have a size of 400, and the activation function used is ELU. The ADAM optimizer is employed to train the network with batches of 64 sequences, each of length 50. The learning rate is 6e-5 for both the endogenous and exogenous models and 8e-5 for the action and value nets. We stabilize the training process by clipping gradient norms to 100 and set ($\lambda=10$) for the uncertainty penalty term. The imagine horizon of 5, as used in Offline DV2~\cite{2022vd4rl}, is adopted for policy optimization. We train a model comprising the dynamics and reward predictor for 25000 epochs using offline visual datasets, followed by policy training within the model for 100000 steps. The codes and datasets are contained in the supplementary materials.

\subsection{Evaluation Metric}\label{app:normalized_return}

To compare the performance of different methods, we define the normalized return based on the statistics of the offline dataset as follows:
\begin{equation*}
    S_{\text{normalize}} = \frac{S_{\text{score}} - S_{\text{min}}}{S_{\text{max}} - S_{\text{min}}}
\end{equation*}
Here, $S_{\text{score}}$ represents the original return obtained by the test method, while $S_{\text{min}}$ and $S_{\text{max}}$ denote the minimum and maximum episodic returns of the dataset for each task across three levels (random, medium\_replay, medium), respectively.

\subsection{Dissociated Reward Prediction}\label{app:dissociated_reward}

The Reward Dissociation used in TIA~\cite{Fu2021LearningTI} for the exogenous model is achieved through the adversarial objective $\mathcal{J}^t_{Radv}$.
\begin{equation*}
    \mathcal{J}_{Radv}^t=-\lambda_{Radv}\max_q\ln q(r_t|s_t^-)
\end{equation*} 
where $\lambda_{Radv}$ for \textit{Walker Walk}, \textit{Cheetah Run}, and \textit{Hopper Hop} are 20000, 20000, and 30000, respectively. This setup involves a minimax strategy, wherein the training of the exogenous model's reward prediction head is interleaved with the exogenous model’s training, occurring for multiple iterations per training step. The reward prediction head is trained to minimize the reward prediction loss, represented by $-\ln q(r_t|s^-_t)$. In contrast, the exogenous model aims to maximize this objective to prevent reward-correlated information from influencing its learned features, as outlined by~\citeauthor{UDA15}. At the same time, TIA optimizes the endogenous model to maximize the log-likelihood of predicting rewards from endogenous states via the objective $\mathcal{J}_R^t=\ln q(r_t|s_t^+)$. The reward prediction loss is calculated using $\ln \mathcal{N} (r_t; \hat{r}_t, 1)$, where $\mathcal{N}(\cdot; \mu, \sigma^2)$ denotes the Gaussian likelihood and $\hat{r}_t$ represents the predicted reward. Notably, neither loss is used to update the endogenous and exogenous models in SeMOPO. The reward prediction loss only updates the reward predictors by stopping the backward gradients to the endogenous states.

\section{Algorithm of SeMOPO}\label{sec:algo}

The pseudo-code of our proposed SeMOPO is provided in~\cref{alg:SeMOPO}.
\begin{algorithm}
   \caption{Training Procedure of SeMOPO}
   \label{alg:SeMOPO}
\begin{algorithmic}
   \STATE {\bfseries Input:} Offline datasets $\mathcal{B}$\\
    \STATE Initialize forward dynamics model $\widetilde{T}_{\theta}, \bar{T}_{\theta}$, posterior encoder $\tilde{q}_{\theta}, \bar{q}_{\theta}$, observation decoder $\widetilde{U}_{\theta}$, reward predictor $R_{\theta}$, policy $\pi_{\theta}$, value model $V_{\theta}$.
    \STATE {// Offline model training}
    \FOR{each training epoch $m=1\cdots M$}
        \STATE {Sample minibatch $(o_{1:T}, a_{1:T-1}, r_{1:{T}})_{1:b}$ from the dataset $\mathcal{B}$ via Conservative Sampling}
        \STATE {Update the forward dynamics model $\widetilde{T}_{\theta},\bar{T}_{\theta}$ and the posterior encoder $\tilde{q}_{\theta}, \bar{q}_{\theta}$}
        \STATE {Update the observation decoder $\mathcal{U}_{\theta}$ and the reward predictor $R_{\theta}$}
    \ENDFOR
    \STATE {// Policy Optimization}
    \FOR{training iteration $i=1\cdots \text{It}$}
        \STATE {Imagine the endogenous latent states $s^+_{1:H}$ by policy $\pi$ using the endogenous state model $\widetilde{T}_{\theta}$}
        \STATE {Estimate the endogenous model uncertainty by the model disagreement}
        \STATE {Obtain the penalized reward $\tilde{r}$ with estimated uncertainty $\tilde{u}$ via~\Cref{eq:uncertainty}}
        \STATE {Train the policy $\pi_{\theta}$ and value model $V_{\theta}$ on the data $(s^+_{1:H-1}, a_{1:H-1}, \tilde{r}_{1:H-1})$}
    \ENDFOR
\end{algorithmic}
\end{algorithm}

\section{Additional Results}

\subsection{The Model Uncertainty Estimation}\label{app:uncertainty}

To validate that the accuracy of model uncertainty estimation is due to the separation of endogenous and exogenous states, rather than a specific computational approach, we employ two different uncertainty estimation methods: the mean-disagreement of the ensemble (md) from Offline DV2, and the variance of logarithm prediction (vlp) from LOMPO. The task-related model uncertainty across various environments and datasets is shown in~\Cref{fig:end_all}. The model uncertainty of SeMOPO is superior under both computational methods compared to other techniques, suggesting that learning the model in the endogenous state space can reduce the estimated uncertainty. 

\Cref{fig:exo_all} presents the estimates of model uncertainty in both endogenous and exogenous state spaces. Notably, the model uncertainty in the exogenous state space is significantly higher than in the endogenous state space. This further implies that the overestimation of model uncertainty in task-related components in previous work is attributable to the unfiltered noise in the latent states.

\begin{figure}[tbp]
\vskip 0.1in
\begin{center}
\centerline{\includegraphics[width=\textwidth]{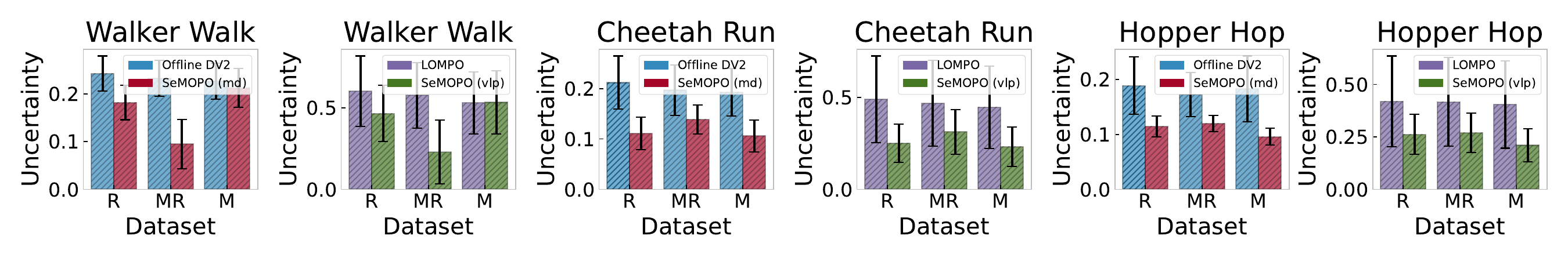}}
\caption{The model uncertainty estimation of SeMOPO, Offline DV2, and LOMPO on the LQV-D4RL dataset. We randomly select 1000 states and report the mean and standard deviation of uncertainty on them. ``md" and ``vlp" are the short names for the mean-disagreement and variance of logarithm prediction, denoting the calculation of uncertainty used in Offline DV2 and LOMPO, respectively.}
\label{fig:end_all}
\end{center}
\vskip -0.1in
\end{figure}

\begin{figure}[tbp]
\vskip 0.1in
\begin{center}
\centerline{\includegraphics[width=\textwidth]{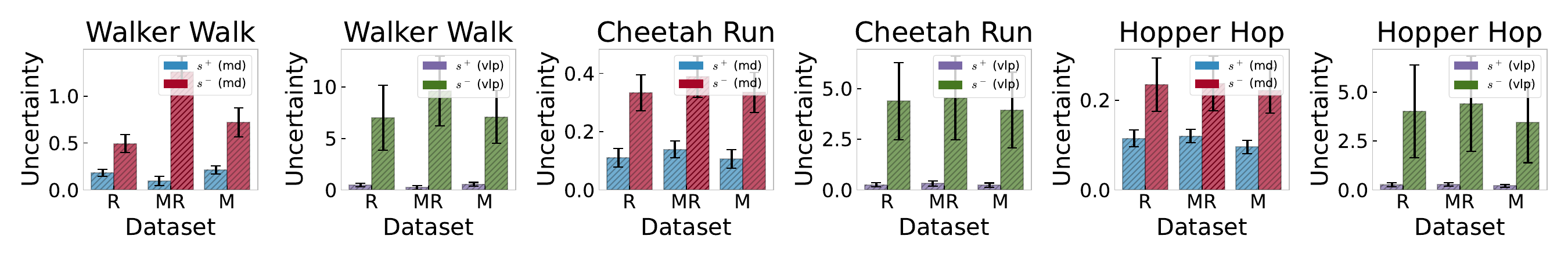}}
\caption{The model uncertainty estimation of SeMOPO on the LQV-D4RL dataset. We randomly select 1000 states and infer the endogenous states $s^+$ and exogenous states $s^-$ by SeMOPO. We report the mean and standard deviation of uncertainty on these inferred states. ``md" and ``vlp" follow the same definition as in~\Cref{fig:end_all}.}
\label{fig:exo_all}
\end{center}
\vskip -0.1in
\end{figure}

\subsection{Evaluation on the V-D4RL Benchmark}\label{app:overall_performance_vd4rl}

To assess the performance of our method on datasets without distractors, we compared SeMOPO with other approaches on the V-D4RL dataset. The results in~\Cref{tab:overall_performance_vd4rl} show that SeMOPO outperforms other methods on the random datasets of two environments, and achieves comparable performance on several other datasets. This indicates that our method is more suitable for datasets collected using non-expert policies, and can also address certain offline visual reinforcement learning problems without distractors.

\setlength{\tabcolsep}{7.0pt}
\begin{table*}[tbp]
\centering
\caption{The performance of different methods on the V-D4RL benchmark. We report the mean and standard deviation of test returns of SeMOPO over 4 seeds. The results for methods other than SeMOPO are sourced from Table 1 in~\cite{2022vd4rl}.}
\label{tab:overall_performance_vd4rl}
\begin{tabular}{@{}llcccccc@{}}
\toprule
\multicolumn{2}{c}{\textbf{Dataset}} &
  \textbf{SeMOPO} &
  \textbf{Offline DV2} &
  \textbf{LOMPO} &
  \textbf{DrQ+BC} &
  \textbf{DrQ+CQL} &
  \textbf{BC}\\ \midrule
\multirow{3}{*}{Walker Walk}      
& random & \textbf{305 $\pm$ 8} & 287 $\pm$ 130 &  219 $\pm$ 81 & 55 $\pm$ 9 & 144 $\pm$ 124 & 20 $\pm$ 2\\
                             & medrep  & 218 $\pm$ 21 & \textbf{565 $\pm$ 181} & 347 $\pm$ 197 & 287 $\pm$ 69 &114 $\pm$ 124 & 165 $\pm$ 43 \\
                             & medium & 406 $\pm$ 30 & 434 $\pm$ 111 & 341 $\pm$ 197 &   \textbf{468 $\pm$ 23} & 148 $\pm$ 161 &409 $\pm$ 31 \\
\multirow{3}{*}{Cheetah Run}    
& random  & \textbf{330 $\pm$ 2} & 329 $\pm$ 2 & 114 $\pm$ 51  &  58 $\pm$ 6 & 59 $\pm$ 84 &0 $\pm$ 0  \\
                             
                             & medrep  & 410 $\pm$ 13 & \textbf{616 $\pm$ 10} & 363 $\pm$ 136 &  448 $\pm$ 36 & 107 $\pm$ 128 & 250 $\pm$ 36 \\
                             & medium & 190 $\pm$ 22 & 172 $\pm$ 35 & 164 $\pm$ 83 &  \textbf{530 $\pm$ 30} & 409 $\pm$ 51 &516 $\pm$ 14  \\
\bottomrule
\end{tabular}
\end{table*}

\subsection{Training Stability}

The gradient norms during model training for Offline DV2 and SeMOPO are shown in~\Cref{fig:grad_norm}.

\begin{figure}[tbp]
\vskip 0.1in
\begin{center}
\centerline{\includegraphics[width=\textwidth]{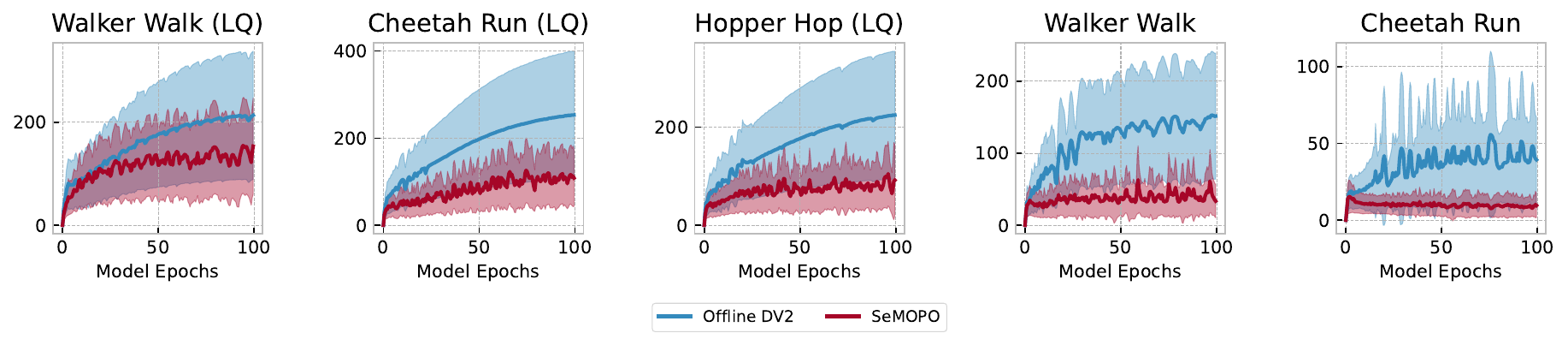}}
\caption{Comparison of gradient norms during model training for Offline DV2 and SeMOPO across five tasks: the first three from LQV-D4RL and the latter two from V-D4RL. Each curve represents aggregated data from three levels of datasets (random, medium\_replay, medium), illustrating the mean (solid line) and standard deviation (shaded region) over four seeds. SeMOPO exhibits lower model gradient norms than Offline DV2, regardless of distractors in observations, indicating that the separation of task-relevant information from observations contributes to more stable model training.}
\label{fig:grad_norm}
\end{center}
\vskip -0.1in
\end{figure}

\subsection{The results on the LQV-D4RL benchmark}

We record the original unnormalized returns for each method in~\Cref{tab:original_results}.

\setlength{\tabcolsep}{2.8pt}
\begin{table*}[tbp]
\centering
\caption{The unnormalized test returns of different methods on the LQV-D4RL benchmark. Mean scores (higher is better) with standard deviation are recorded across 4 seeds for each task.}
\label{tab:original_results}
\begin{tabular}{@{}llccccccc@{}}
\toprule
\multicolumn{2}{c}{\textbf{Dataset}} &
  \textbf{SeMOPO} &
  \textbf{Offline DV2} &
  \textbf{LOMPO} &
  \textbf{DrQ+BC} &
  \textbf{DrQ+CQL} &
  \textbf{BC}&
  \textbf{InfoGating}\\ \midrule
\multirow{3}{*}{Walker Walk}      & random  & \textbf{459 $\pm$ 41} & 167 $\pm$ 36 & 133 $\pm$ 40 &  27 $\pm$ 1 & 26 $\pm$ 2 &25 $\pm$ 2 & 49 $\pm$ 7 \\
                             & medrep   & \textbf{521 $\pm$ 42} & 175 $\pm$ 31 & 218 $\pm$ 69 & 27 $\pm$ 3 & 25 $\pm$ 3 &28 $\pm$ 7 & 58 $\pm$ 22\\
                             & medium & 273 $\pm$ 47 & 68 $\pm$ 26 & 63 $\pm$ 15 &  390 $\pm$ 36 & 25 $\pm$ 3 & \textbf{413 $\pm$ 30} & 98 $\pm$ 41\\
\multirow{3}{*}{Cheetah Run}     & random  & \textbf{254 $\pm$ 30} & 40 $\pm$ 15 & 66 $\pm$ 18 &  94 $\pm$ 30 & 0 $\pm$ 0 &22 $\pm$ 20 & 58 $\pm$ 16 \\
                             & medrep   & 258 $\pm$ 28 & 66 $\pm$ 31 & 78 $\pm$ 32 &  166 $\pm$ 93 & 1 $\pm$ 0 &20 $\pm$ 18 & \textbf{267 $\pm$ 52} \\
                             & medium & \textbf{293 $\pm$ 32} & 81 $\pm$ 57 & 52 $\pm$ 37 &  260 $\pm$ 28 & 0 $\pm$ 0 &252 $\pm$ 40 & 288 $\pm$ 39 \\
\multirow{3}{*}{Hopper Hop}     & random & 58 $\pm$ 5 &  0 $\pm$ 0  & 0 $\pm$ 0  & 7 $\pm$ 8 & 0 $\pm$ 0 &5 $\pm$ 6 & \textbf{67 $\pm$ 11} \\
                             & medrep  & \textbf{77 $\pm$ 6} &  0 $\pm$ 0 & 0 $\pm$ 0  &   21 $\pm$ 15 & 0 $\pm$ 0 &3 $\pm$ 1 & 45 $\pm$ 14 \\
                             & medium & \textbf{105 $\pm$ 14} &  2 $\pm$ 4 & 1 $\pm$ 3 &  68 $\pm$ 16 & 0 $\pm$ 0 &35 $\pm$ 5 & 49 $\pm$ 8 \\
\multirow{3}{*}{Humanoid Walk} & random & \textbf{6 $\pm$ 3} & 3 $\pm$ 1 & 2 $\pm$ 2 & 1 $\pm$ 1 & 3 $\pm$ 2 & 1 $\pm$ 1 & 3 $\pm$ 2 \\
                               & medrep & 7 $\pm$ 4 & 2 $\pm$ 1 & 5 $\pm$ 2 & 2 $\pm$ 1 & 2 $\pm$ 2 & \textbf{10 $\pm$ 7} & 2 $\pm$ 2 \\
                               & medium & 6 $\pm$ 5 & 4 $\pm$ 2 & 3 $\pm$ 2 & \textbf{14 $\pm$ 7} & 2 $\pm$ 1 & 4 $\pm$ 2 & 4 $\pm$ 3 \\
\multirow{3}{*}{Car Racing} & random & \textbf{418 $\pm$ 79} & 233 $\pm$ 44 & 387 $\pm$ 89 & -10 $\pm$ 32 & -92 $\pm$ 1 & -59 $\pm$ 13 & -79 $\pm$ 5 \\
                            & medrep & \textbf{362 $\pm$ 87} & 181 $\pm$ 88 & 325 $\pm$ 184 & -68 $\pm$ 8 & -93 $\pm$ 1 & -80 $\pm$ 3 & -76 $\pm$ 4 \\
                            & medium & \textbf{408 $\pm$ 158} & 285 $\pm$ 153 & 313 $\pm$ 108 & 180 $\pm$ 101 & -83 $\pm$ 1 & -67 $\pm$ 9 & -67 $\pm$ 6 \\
\bottomrule
\end{tabular}
\end{table*}

\begin{figure}[tbp]
\vskip 0.1in
\begin{center}
\centerline{\includegraphics[width=\textwidth]{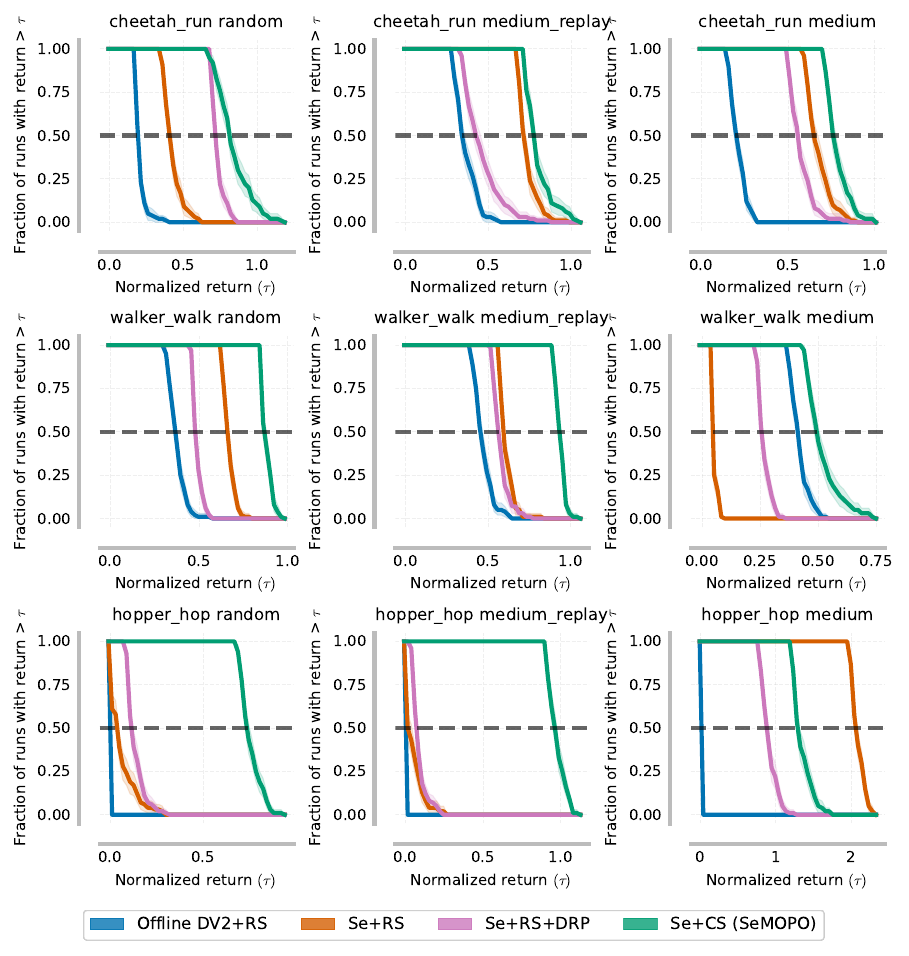}}
\caption{Performance evaluation results of ablated methods of SeMOPO on the LQV-D4RL benchmark for 200 test episodes. Shaded regions represent pointwise 95\% confidence bands based on percentile bootstrap with stratified sampling~\cite{Agarwal2021DeepRL}. Removing any component of SeMOPO leads to a performance drop.}
\label{fig:ablation_all}
\end{center}
\vskip -0.1in
\end{figure}


\end{document}